%% file: article_CMSJ_IEEE_TASE_final.tex
\documentclass[letterpaper, 10pt, journal, twoside]{IEEEtran}

\usepackage{amsmath,amsfonts}
\usepackage{algorithmic}
\usepackage{algorithm}
\usepackage{array}
\usepackage[caption=false,font=normalsize,labelfont=sf,textfont=sf]{subfig}
\usepackage{textcomp}
\usepackage{stfloats}
\usepackage{url}
\usepackage{verbatim}
\usepackage{graphicx}
\usepackage{cite}
\usepackage{soul}
\usepackage{xcolor}
\usepackage{booktabs}
\usepackage{multirow}
\usepackage{fancyhdr}
\fancypagestyle{firstpage}{%
  \fancyhf{}
  
  \fancyhead[C]{\footnotesize This work has been submitted to the IEEE for possible publication. Copyright may be transferred without notice, after which this version may no longer be accessible.}
}
\usepackage{epsfig} 

\usepackage{bm}
\usepackage{lipsum}

\usepackage{tikz}
\usetikzlibrary{arrows.meta, calc, patterns}
\usetikzlibrary{shapes.geometric}

\newtheorem{property}{Property}
\newtheorem{remark}{Remark}
\newtheorem{assumption}{Assumption}
\newtheorem{proposition}{Proposition}
\newtheorem{problem}{Problem}
\newtheorem{corollary}{Corollary}

\begin{document}

\title{Lyapunov-Based PI-Like Control for Robust Trajectory Tracking of a Four-Wheel Independently Driven and Steered Robot: Design and Experimental Validation}

\author{Branimir Ćaran$^{1}$, Vladimir Milić$^{1}$, Marko Švaco$^{1}$ and Bojan Jerbić$^{1,2}$%
	\thanks{Manuscript received: Month, Day, Year; Revised Month, Day, Year; Accepted Month, Day, Year.}
	\thanks{This paper was recommended for publication by Editor Editor A. Name upon evaluation of the Associate Editor and Reviewers’ comments.
		This work was supported by Regional Centre of Excellence for Robotic Technology (CRTA), funded by the ERDF fund.}
	\thanks{$^{1}$Authors are with the Faculty of Mechanical Engineering and Naval Architecture, University of Zagreb, Zagreb HR-10000, Croatia
		{\tt\footnotesize branimir.caran@fsb.unizg.hr}}%
	\thanks{$^{2}$Bojan Jerbić is with Croatian Academy of Sciences and Arts, Zagreb 10000, Croatia.}%
	\thanks{Digital Object Identifier (DOI): see top of this page.}
}


\maketitle
\thispagestyle{firstpage}

\begin{abstract}
In this paper, a Lyapunov-based synthesis of a PI-like controller is proposed for robust trajectory tracking of an independently driven and steered four-wheel mobile robot. For the robot considered in this work, an explicit structurally verified mathematical model is used to enable systematic controller design with rigorous stability guarantees suitable for real time implementation. An augmented Lyapunov-based practical stability analysis is developed for the combined velocity-error and integral-error dynamics of the inner loop, yielding explicit bounds and sufficient conditions for practical stability and uniform ultimate boundedness of the combined velocity-error and integral-error state. The resulting control law retains a PI-like structure with model-based feedforward compensation, making it suitable for implementation on standard embedded platforms while improving robustness against configuration dependent residual dynamics and unmodelled effects. The effectiveness and robustness of the proposed design are demonstrated experimentally on a four-wheel independently steered and independently driven mobile robot platform, under both horizontal and vertical operating conditions and benchmarked against a PI controller and a sliding-mode controller.
\end{abstract}

\renewcommand{\abstractname}{Note to Practitioners}
\begin{abstract}
This work is motivated by the practical need to make four-wheel independent steer and four-wheel independent drive mobile robots track reference trajectories accurately and reliably under real operating conditions, where actuator imperfections, model uncertainty, and residual dynamics degrade performance. In industrial and field settings, practitioners often favor PI controllers because they are simple to tune and run on standard embedded hardware, but a plain PI structure provides no formal stability guarantee and can lose accuracy when the robot's effective dynamics change with its configuration. More advanced control systems such as sliding-mode control improve robustness but tend to introduce chattering and demand greater tuning effort and computation. The method proposed here retains the familiar PI-like structure that engineers are comfortable deploying, extends it with a model-based feedforward term, and derives the control gains from a Lyapunov-based analysis. This gives the designer explicit conditions under which the tracking error is guaranteed to stay bounded and converge close to zero, rather than relying on trial and error tuning alone. To apply the method, the user needs a structurally verified dynamic model of the platform; the paper supplies such a model and shows how the stability conditions translate into concrete gain bounds. The resulting control law is computationally light and was validated experimentally on a real robot platform in both horizontal and vertical operating conditions, outperforming a conventional PI controller and a sliding-mode controller.
\end{abstract}

\renewcommand{\abstractname}{Abstract}

\begin{IEEEkeywords}
four-wheel independently steered and driven mobile robot, Lyapunov-based control, PI-like control, robust trajectory tracking, practical stability, uniform ultimate boundedness.
\end{IEEEkeywords}

\section{Introduction}

\subsection{Motivation}

The motivation for considering a four-wheeled mobile robot with independent steering and driving arises from the need for accurate motion control in complex scenarios and dynamically changing environments. This robot configuration offers high manoeuvrability and actuation redundancy, and these capabilities have motivated substantial research on this class of robots, leading to a variety of control strategies; see, for example, \cite{LSLLLL25, THMWC25, SABGDWKC25, ZHWMLX21, JWXMZZW20, LLLLLC16} and the references therein. However, the same configuration also introduces a nontrivial control problem: robust trajectory tracking must be achieved in the presence of configuration-dependent residual dynamics and unmodelled effects, such as friction variations, gravity-related effects during vertical surface operation, and adhesion/contact phenomena, while consistently accounting for the underlying dynamics, i.e., real forces, torques, and constraints, within a unified control framework.

This motivates a control design that combines model-based compensation with a simple feedback structure suitable for real-time implementation. In this work, a Lyapunov-based PI-like controller is developed for the considered robot platform. The proposed formulation uses an explicit structurally verified model to derive residual error dynamics and to construct an augmented Lyapunov function for the closed-loop system. This yields explicit bounds and sufficient conditions for practical stability and uniform ultimate boundedness of the combined velocity-error and integral-error state. In this way, robustness with respect to configuration-dependent residual terms and bounded unmodelled effects is addressed while retaining a controller structure compatible with standard embedded implementation.

\subsection{Related Work}

In the control of wheeled mobile robots, passivity provides a physically meaningful framework for analysing energy exchange, stability, and robustness. In particular, passivity and output strict passivity have often been used to obtain energy-based stability properties and, under appropriate assumptions, finite $\mathcal{L}_2$-gain estimates relating disturbance inputs to tracking or regulation errors. Such results are especially useful when the closed-loop dynamics can be written in a dissipative form with respect to a well-defined storage function and supply rate. Consequently, passivity and $\mathcal{L}_2$-based robustness analysis have been extensively explored in the mobile-robot literature as tools for formalizing robustness of feedback control systems.

The authors in \cite{MRTC12} combined passivity, dynamic compensation, and visual servo control with formal $\mathcal{L}_2$-gain robustness analysis and experimental validation on a unicycle-type robot, under the assumptions of good calibration and near-perfect knowledge of the moving object velocity. Similarly, the approach in \cite{ARSTC15} provides a framework for visual servo control of mobile manipulators with stability and $\mathcal{L}_2$ performance analyses. By introducing passive configuration decomposition, the authors in \cite{LL17} proposed a framework for stabilizing a class of nonholonomic mechanical systems that exploits their Lagrangian structure and passivity, under assumptions such as a product structure of the configuration space, zero nonholonomic connection, and a local description. A passivity-based port-Hamiltonian approach that, through a dynamic extension, allows a group of mobile robots to follow a desired trajectory and formation using only position measurements, without velocity measurements, was proposed in \cite{LBSM23}. In that work, all robots are assumed to have the same constant desired orientation and velocity, while the communication graph is restricted to a certain class of motion and connection scenarios. In \cite{CW25}, passivity and barrier functions are combined with quadratic programming synthesis for safe maneuvering control of mechanical systems under constraints, assuming an exact system model and nonsingularity of the control input matrix.

Besides passivity-based methods, adaptive and sliding-mode approaches have also been widely used to improve robustness of wheeled mobile robots. The robust adaptive sliding controller proposed in \cite{WWXLZJ24} attenuates abrupt disturbances within allowable bounds and ensures finite-time convergence, although it requires complex parameter tuning and provides limited treatment of noise sensitivity and unmodelled nonlinear effects. Furthermore, the robust adaptive fault-tolerant control strategy based on nonsingular terminal sliding mode control and barrier functions proposed in \cite{WWLXZX25} addresses multiple actuator faults and input saturation with finite-time convergence guarantees, but relies on a specific fault model.

Related principles and methodologies developed for autonomous road vehicles can also be adapted to four-wheeled mobile robots. For example, robust path tracking, port-Hamiltonian/passivity-based observation, and yaw-stability-oriented path tracking have been investigated in \cite{WZWZL25, MHSW23, MCWXW22}. Although these methods provide useful insights for four-wheel independently steered and driven platforms, they are usually developed in the context of autonomous vehicles and are not directly tailored to the residual dynamics, steering-drive redundancy, and contact-dependent effects encountered in compact mobile robots operating on horizontal and vertical surfaces.

Despite this active body of research, robust control of independently steered and driven four-wheeled mobile robots with a simple implementation-oriented PI-like structure, explicit residual-dynamics bounds, and comprehensive experimental validation remains insufficiently addressed. In contrast to works that formulate robustness primarily through passivity or $\mathcal{L}_2$-gain estimates, the present work uses passivity-inspired structural properties of the robot dynamics as a basis for a Lyapunov-based practical stability analysis. This leads to explicit sufficient conditions for practical stability and uniform ultimate boundedness of the combined velocity-error and integral-error state in the presence of configuration-dependent residual terms and bounded unmodelled effects.

\subsection{Contributions}

This work develops a Lyapunov-based PI-like control framework for robust trajectory tracking of a four-wheel mobile robot with independent steering and driving under a symmetric steering configuration. The proposed approach uses a reduced velocity-space model with bounded unmodelled effects represented as memoryless nonlinearities, and combines model-based feedforward compensation with proportional-integral feedback to establish practical stability and uniform ultimate boundedness of the combined inner-loop velocity-error and integral-error state.

The main contributions of this work are summarized as follows:
\begin{itemize}
\item A reduced velocity-space dynamic model of the considered mobile robot is developed, with explicitly derived diagonal inertia, Coriolis, and input maps. The core structural properties of this model, including positive definiteness, the energy-balance identity, input regularity, and uniform spectral bounds, are established. Based on these properties, explicit residual-dynamics, Lipschitz, kinematic mismatch, and uncertainty bounds are derived, providing a constructive foundation for robust trajectory-tracking control design.

\item A Lyapunov-based PI-like control law with model-based feedforward compensation is synthesized in the reduced velocity space. A Lyapunov-based practical stability analysis is developed for the closed-loop error dynamics, yielding explicit sufficient gain conditions for practical stability and uniform ultimate boundedness of the combined velocity-error and integral-error state. The resulting tuning conditions are expressed in terms of constants extracted from the robot dynamics and the residual bounds.

\item The proposed control strategy is experimentally validated on a four-wheel independently driven and steered mobile robot platform on both horizontal and vertical surfaces, and benchmarked against a PI controller and a sliding-mode controller. The experiments demonstrate robust trajectory tracking in the presence of external disturbances, unmodelled memoryless nonlinearities, gravity-related effects, adhesion/contact phenomena, and varying contact conditions, confirming the practical robustness and applicability of the proposed control framework. The comparison shows that the proposed controller achieves lower tracking error and reduced actuation effort than both baseline controllers under these conditions.
\end{itemize}

The paper is organized as follows. A detailed mathematical model of the considered mobile robot is given in Section \ref{sec_SystemDescription}, where the kinematic model, reduced dynamic model, and its structural properties are derived. The proposed PI-like control strategy, residual error dynamics, and Lyapunov-based practical stability analysis are presented in Section \ref{sec_ControllerDesign}. Section \ref{sec_Results} describes the experimental setup and reports trajectory-tracking results on horizontal and vertical surfaces. Finally, concluding remarks are provided in Section \ref{sec_Conclusion}.

\section{Mathematical model and properties}\label{sec_SystemDescription}

\subsection{Kinematics}

The kinematics of four-wheel mobile robots with independent steering and driving, corresponding to the robot structure considered in this work, have been derived in \cite{LSLLLL25, THMWC25, LWLLHYRZ22, LL15}. Following \cite{CMSJ24}, several simplifying assumptions are introduced here to obtain a reduced kinematic model suitable for the subsequent control design. A schematic representation of this robot is shown in Fig. \ref{fig1_robot_schematic}.

The assumptions introduced form the so-called symmetric model where the velocities of the wheels are equal and the front and rear wheels are steered parallel to each other and at the same steering speed. For the sake of completeness, these kinematic equations of the robot are also given here in the following compact form:
\begin{equation}\label{eq_kinem_compact}
\mathbf{\dot{q}} = \mathbf{J}(\mathbf{q}) \mathbf{v},
\end{equation}
where $\mathbf{q} \in \mathbb{R}^{6}$ is the generalized coordinates (state) vector, $\mathbf{v} \in \mathbb{R}^{3}$ is the velocity input vector and $\mathbf{J}(\mathbf{q}) \in \mathbb{R}^{6 \times 3}$ is the kinematic Jacobian matrix which are given as follows:
\begin{align}
\mathbf{q} &= \begin{bmatrix} x & y & \theta & \varphi & \delta_f & \delta_r \end{bmatrix}^{\text{T}}, \label{eq1a_kinem_compact}\\
\mathbf{v} &= \begin{bmatrix} v_w & \omega_f & \omega_r \end{bmatrix}^{\text{T}}, \label{eq1b_kinem_compact} \\
\mathbf{J}(\mathbf{q}) &=
\begin{bmatrix}
\frac{1}{2} (\cos(\delta_f + \theta) + \cos(\delta_r + \theta)) & 0 & 0 \\
\frac{1}{2} (\sin(\delta_f + \theta) + \sin(\delta_r + \theta)) & 0 & 0 \\
A (\sin(\delta_f) - \sin(\delta_r)) & 0 & 0 \\
r^{-1} & 0 & 0 \\
0 & 1 & 0 \\
0 & 0 & 1
\end{bmatrix}. \label{eq1c_kinem_compact}
\end{align}

In \eqref{eq_kinem_compact}-\eqref{eq1c_kinem_compact} the coordinates $x$ and $y$ represent the reference point of the robot in the Cartesian frame, $\theta$ is the orientation angle which is expressed in relation to the positive $x$-axis, $\varphi$ is the wheel spin angle, $v_w$ is the wheel tangential (peripheral) linear velocity, $\delta_f$, $\omega_f$, $\delta_r$ and $\omega_r$ are the steering angles and steering velocities of the front and rear wheels, respectively and $r$ is the wheel radius. Furthermore,
\begin{equation}\label{eq2c_kinem_compact}
A = \frac{a}{2(a^2 + b^2)},
\end{equation} 
where $a$ and $b$ denote the longitudinal and lateral distances from the robot centroid to each wheel (i.e., half of the wheelbase and half of the track width, see Fig. \ref{fig1_robot_schematic}).

Throughout the paper we use the kinematic relation $\mathbf{\dot{q}} = \mathbf{J}(\mathbf{q}) \mathbf{v}$. Hence any term that depends on $\mathbf{\dot{q}}$ can be equivalently expressed as a function of $(\mathbf{q}, \mathbf{v})$ by evaluating it at $\mathbf{\dot{q}} = \mathbf{J}(\mathbf{q}) \mathbf{v}$.

\begin{remark}
It should be noted that the kinematic model considered here assumes rolling without lateral slip. Under this assumption, the considered robot platform is a nonholonomic system, since each wheel enforces a non-integrable zero lateral velocity constraint and the wheel steering angles are state variables. Therefore, lateral translation is achieved only indirectly after wheel reorientation, not instantaneously. If lateral slip is allowed through a frictional contact model, the robot may be modeled as a holonomic system.
\end{remark}

\begin{figure}[!t]
\centering
\scalebox{0.6}{\input{Figures/schema_robot_horiz_new_ver2.tex}}
\caption{Mobile robot schematics.} \label{fig1_robot_schematic}
\end{figure}

\subsection{Dynamics}

It is well-known that the dynamics equations can be obtained by modelling the system using the Euler-Lagrange equations which are based on setting the total energy of the system. The total kinetic energy of the simplified symmetric robot system considered in this work is as follows:
\begin{equation}\label{eq_kinetic_energy}
\begin{aligned}
T &= \frac{1}{2} m(\dot{x}^2 + \dot{y}^2) + \frac{1}{2} I \dot{\theta}^2 + \frac{1}{2} (4 I_{\varphi}) \dot{\varphi}^2 \\
&+ \frac{1}{2} (2 I_\delta) \dot{\delta}_f^2 + \frac{1}{2} (2 I_\delta) \dot{\delta}_r^2,
\end{aligned}
\end{equation}
where $m = m_b + 4m_w$ is the total mass of the robot while $m_b$ and $m_w$ are the masses of the robot's body and each wheel, respectively, $I = I_\theta + 4m_w(a^2+b^2)$ is the yaw moment of inertia of the chassis about the $z$-axis through the centre of the mass, $I_{\theta}$ is central yaw inertia of the chassis about the same axis, $I_{\varphi}$ is the spin moment of inertia of one wheel about its axle, $I_{\delta}$ is the steering-axis moment of inertia of one wheel about its steering axis. Note that in \eqref{eq_kinetic_energy} the factors $4I_{\varphi}$ and $2I_{\delta}$ appear because $\varphi$ is shared by all four wheels, while $\delta_f$ and $\delta_r$ are each shared by two steerable wheels (front and rear), respectively.

Furthermore, by choosing the vector of control torques of the system as $\bm{\tau} = \begin{bmatrix} \tau_w & \tau_f & \tau_r\end{bmatrix}^{\text{T}} \in \mathbb{R}^{3}$, where $\tau_w$ is the average wheel spin torque, $\tau_f$ is the average front steering torque, and $\tau_r$ the average rear steering torque $\tau_w = \frac{1}{4} \sum_{i=1}^{4} \tau_{d i}$, $\tau_f = \frac{1}{2} \left( \tau_{s 1} + \tau_{s 4} \right)$, $\tau_r = \frac{1}{2} \left( \tau_{s 2} + \tau_{s 3} \right)$ where $\tau_{d i}$ is the drive torque of the $i$-th wheel, and $\tau_{s i}$ is the torque of the $i$-th wheel's steering actuator. Then based on the Euler-Lagrange formalism from the expression for kinetic energy \eqref{eq_kinetic_energy}, after certain calculations and algebraic manipulations, the dynamics of the robot can be written in the following compact form of a matrix differential equation
\begin{equation}
\begin{aligned}
\mathbf{M}(\mathbf{q}) \mathbf{\ddot{q}} + \mathbf{C}(\mathbf{q}, \mathbf{\dot{q}}) \mathbf{\dot{q}} + \mathbf{f}(\mathbf{q}, \mathbf{\dot{q}}) &= \mathbf{B}(\mathbf{q}) \bm{\tau} - \mathbf{A}^{\text{T}}(\mathbf{q}) \bm{\lambda},\\ 
\mathbf{A}(\mathbf{q}) \mathbf{\dot{q}} &= \mathbf{0}, \label{eq_robot_dynamics}
\end{aligned}
\end{equation}
where $\mathbf{M}(\mathbf{q}) \in \mathbb{R}^{6 \times 6}$ is the inertia matrix, $\mathbf{C}(\mathbf{q}, \mathbf{\dot{q}}) \in \mathbb{R}^{6 \times 6}$ is the Coriolis and centripetal matrix and $\mathbf{B}(\mathbf{q}) \in \mathbb{R}^{6 \times 3}$ is the input matrix. The elements of these matrices are as follows:
\begin{align}
\mathbf{M}(\mathbf{q}) &= \mathrm{diag}\big(m,\, m,\, I,\, 4I_{\varphi},\, 2I_\delta,\, 2I_\delta\big),\\
\mathbf{C}(\mathbf{q}, \mathbf{\dot{q}}) &= \mathbf{0},\\
\mathbf{B}(\mathbf{q}) &=
\begin{bmatrix}
\frac{2}{r} \big(\cos(\delta_f + \theta) + \cos(\delta_r + \theta)\big) & 0 & 0\\
\frac{2}{r} \big(\sin(\delta_f + \theta) + \sin(\delta_r + \theta)\big) & 0 & 0\\
\frac{2a}{r} \big(\sin(\delta_f) - \sin(\delta_r)\big) & 0 & 0\\
4 & 0 & 0\\
0 & 2 & 0\\
0 & 0 & 2
\end{bmatrix}.
\end{align}

In \eqref{eq_robot_dynamics} the constraint matrix $\mathbf{A}(\mathbf{q}) \in \mathbb{R}^{3 \times 6}$ is given as follows
\begin{equation}
\mathbf{A}(\mathbf{q}) =
\begin{bmatrix}
1 & 0 & 0 & -r a_{2} & 0 & 0\\
0 & 1 & 0 & -r a_{1} & 0 & 0\\
0 & 0 & 1 & -r a_{3} & 0 & 0
\end{bmatrix},
\end{equation}
with $a_{1} = \frac{1}{2} \left( \sin(\delta_f + \theta) + \sin(\delta_r + \theta) \right)$, $a_{2} = \frac{1}{2} \left( \cos(\delta_f + \theta) + \cos(\delta_r+\theta) \right)$, $a_{3} = A \left( \sin(\delta_f) - \sin(\delta_r) \right)$, while $\bm{\lambda} \in \mathbb{R}^{3}$ is the vector of Lagrange multipliers, i.e. vector of constraint reaction forces.

\begin{remark}
Since mathematical simplifications are introduced in this work to obtain the kinematic model \eqref{eq_kinem_compact}, the constraint matrix $\mathbf{A}(\mathbf{q})$ is constructed as a full-rank left-nullspace complement of the derived Jacobian $\mathbf{J}(\mathbf{q})$. Hence, it satisfies $\mathbf{A}(\mathbf{q}) \mathbf{J}(\mathbf{q}) = \mathbf{0}$, $\operatorname{rank}\mathbf{A}(\mathbf{q}) = 3$, and therefore $\ker \mathbf{A}(\mathbf{q}) = \operatorname{im}\mathbf{J}(\mathbf{q})$. Consequently, the admissible generalized velocities satisfying $\mathbf{A}(\mathbf{q}) \mathbf{\dot{q}}=\mathbf{0}$ are exactly those that can be represented in the reduced form $\mathbf{\dot{q}} = \mathbf{J}(\mathbf{q}) \mathbf{v}$. In this sense, the constraint matrix $\mathbf{A}(\mathbf{q})$, apart from being related to the rolling-without-slipping assumptions, should be interpreted as a mathematical constraint induced by the reduced symmetric kinematic model. This construction enables the constrained Euler-Lagrange model to be projected into the reduced control-oriented form (see, e.g., \cite{CLHKBKT05, FL98, SYK94}) used in the subsequent analysis.
\end{remark}

The vector $\mathbf{f}(\mathbf{q}, \mathbf{\dot{q}}) \in \mathbb{R}^{6}$ collects unknown effects and the following assumption is introduced.
\begin{assumption}\label{assum_f}
Let the vector $\mathbf{f}(\mathbf{q}, \mathbf{\dot{q}})$ in \eqref{eq_robot_dynamics} represents bounded uncertainties in the robot dynamics and can be in the form of external disturbances and/or unmodelled static nonlinearities. Here, “static” refers to memoryless nonlinearities, i.e., algebraic uncertainty terms that depend instantaneously on the current states and velocities and do not introduce additional internal dynamics.

Let the uncertainty act on all coordinates i.e. 
\begin{equation}
\mathbf{f}(\mathbf{q}, \mathbf{\dot{q}}) = \begin{bmatrix} f_{x} & f_{y} & f_{\theta} & f_{\varphi} & f_{\delta_f} & f_{\delta_r} \end{bmatrix}^{\text{T}}. 
\end{equation}
Assume there exist nonnegative functions $g_k(\mathbf{q})$, $h_k(\mathbf{q})$ such that
\begin{equation}
|f_{k}(\mathbf{q}, \mathbf{\dot{q}})| \leq g_{k}(\mathbf{q}) + h_{k}(\mathbf{q}) |\dot{k}|,
\end{equation}
for $k \in \{x, y, \theta, \varphi, \delta_f, \delta_r\}$. Note that each component $f_k$ depends on $\mathbf{\dot{q}}$ only through its corresponding scalar velocity $\dot{k}$. Let uniform bounds be defined on a compact set $\mathcal{X} \subset \mathbb{R}^{6}$ as follows
\begin{equation}\label{eq_sup_ci_di}
c_k = \sup_{\mathbf{q} \in \mathcal{X}} g_k(\mathbf{q}),\quad d_k = \sup_{\mathbf{q} \in \mathcal{X}} h_k(\mathbf{q}).
\end{equation}
Then
\begin{equation}\label{eq_assum_fd}
\|\mathbf{f}\| \leq \|\mathbf{c}\| + \|\mathbf{d}\| \left\| \mathbf{\dot{q}} \right\|,
\end{equation}
where $\mathbf{c} = \left[ c_{x}\; c_{y}\; c_{\theta}\; c_{\varphi}\; c_{\delta_f}\; c_{\delta_r} \right]^{\text{T}}$, $\mathbf{d} = \left[d_{x}\; d_{y}\; d_{\theta}\; d_{\varphi}\; d_{\delta_f}\; d_{\delta_r}\right]^{\text{T}}$.

Furthermore, $\mathbf{f}$ is assumed to be locally Lipschitz. Let $\mathbf{f} : \mathbb{R}^{6} \times \mathbb{R}^{6} \to \mathbb{R}^{6}$ be $C^{1}$ on an open set containing $\mathcal{X} \times \mathcal{D}$, where $\mathcal{X} \subset \mathbb{R}^{6}$ and $\mathcal{D} \subset \mathbb{R}^{6}$ are compact. Then, with the induced spectral norm,
\begin{equation}\label{eq_assum_Lf_a}
\begin{aligned}
L_{f_1} &= \sup_{\mathbf{q}, \mathbf{\dot{q}} \in \mathcal{X} \times \mathcal{D}} \left\| \frac{\partial \mathbf{f}}{\partial \mathbf{q}} \right\| < \infty,\\ 
L_{f_2} &= \sup_{\mathbf{q}, \mathbf{\dot{q}} \in \mathcal{X} \times \mathcal{D}} \left\| \frac{\partial \mathbf{f}}{\partial \mathbf{\dot{q}}} \right\| < \infty,
\end{aligned}
\end{equation}
and hence $\left\| \partial \mathbf{f}/\partial \mathbf{q} \right\| \leq L_{f_1}$, $\left\| \partial \mathbf{f}/\partial \mathbf{\dot{q}} \right\| \leq L_{f_2}$, $\forall \mathbf{q}, \mathbf{\dot{q}} \in \mathcal{X} \times \mathcal{D}$.

For instance, unmodelled friction effects, gravity-related effects (during vertical surface operation), and adhesion/contact phenomena may be lumped into $\mathbf{f}(\mathbf{q}, \mathbf{\dot{q}})$ and treated as bounded disturbances on the operating set.
\end{assumption}

Next, the procedure as in \cite{WWXLZJ24, WWLXZX25, LBY20, CLHKBKT05, FL98, SYK94, LRLLW25, CWYNW21} is carried out to transform the robot model into a control-oriented form. Substituting \eqref{eq_kinem_compact} and its derivative $\mathbf{\ddot{q}} = \mathbf{\dot{J}}(\mathbf{q}) \mathbf{v} + \mathbf{J}(\mathbf{q}) \mathbf{\dot{v}}$ into \eqref{eq_robot_dynamics}, then multiplying by $\mathbf{J}^{\text{T}}(\mathbf{q})$ and taking into account that $\mathbf{J}^{\text{T}}(\mathbf{q}) \mathbf{A}^{\text{T}}(\mathbf{q}) = \mathbf{0}$ finally yields
\begin{equation}\label{eq_dynamics_in_v}
\mathbf{\tilde{M}}(\mathbf{q}) \mathbf{\dot{v}} + \mathbf{\tilde{C}}(\mathbf{q}, \mathbf{\dot{q}}) \mathbf{v} = \mathbf{\tilde{B}}(\mathbf{q}) \bm{\tau} - \mathbf{\tilde{f}}(\mathbf{q}, \mathbf{v}),
\end{equation}
where $\mathbf{\tilde{M}}(\mathbf{q}) \in \mathbb{R}^{3 \times 3}$ is the inertia matrix, $\mathbf{\tilde{C}}(\mathbf{q}, \mathbf{\dot{q}}) \in \mathbb{R}^{3 \times 3}$ is the Coriolis and centripetal matrix, $\mathbf{\tilde{B}}(\mathbf{q}) \in \mathbb{R}^{3 \times 3}$ is the input matrix and $\mathbf{\tilde{f}} \in \mathbb{R}^3$ is the vector of bounded uncertainties. These matrices and vectors are obtained as follows:

\begin{equation}\label{eq_tildeM}
\begin{aligned}
\mathbf{\tilde{M}}(\mathbf{q}) &= \mathbf{J}^{\text{T}}(\mathbf{q})\, \mathbf{M}(\mathbf{q})\, \mathbf{J}(\mathbf{q}) =  \mathrm{diag}\big(\tilde{M}_{11},\, 2I_\delta,\, 2I_\delta \big),\\
\tilde{M}_{11} &= \frac{4 I_{\varphi}}{r^2} + \frac{m}{2}\big(1+\cos(\delta_f - \delta_r)\big) \\
&+ I A^2 \big(\sin(\delta_f) - \sin(\delta_r)\big)^{2},
\end{aligned}
\end{equation}

\begin{equation}\label{eq_tildeC}
\begin{split}
\mathbf{\tilde{C}}(\mathbf{q}, \mathbf{\dot{q}}) &= \mathbf{J}^{\text{T}}(\mathbf{q})\, \mathbf{M}(\mathbf{q})\, \mathbf{\dot{J}}(\mathbf{q}) = \mathrm{diag}(\tilde{C}_{11},\, 0,\, 0),\\
\tilde{C}_{11}
&= I A^2 \big(\sin(\delta_f) - \sin(\delta_r)\big) \\
&\cdot \big(\cos(\delta_f) \dot{\delta}_f - \cos(\delta_r) \dot{\delta}_r\big) \\
&- \frac{m}{4}\sin(\delta_f - \delta_r) \big(\dot{\delta}_f - \dot{\delta}_r\big),
\end{split}
\end{equation}

\begin{equation}\label{eq_tildeB}
\begin{split}
\mathbf{\tilde{B}}(\mathbf{q}) &= \mathbf{J}^{\text{T}}(\mathbf{q})\, \mathbf{B}(\mathbf{q}) = \mathrm{diag}\big(\tilde{B}_{11},\, 2,\, 2\big),\\
\tilde{B}_{11} &= \frac{1}{r} \Big[2\big(1 + \cos(\delta_f - \delta_r)\big) \\
&+ \frac{a^2}{a^2+b^2} \big(\sin(\delta_f) - \sin(\delta_r)\big)^{2} + 4\Big],
\end{split}
\end{equation}

\begin{equation}\label{eq_tildefd}
\begin{split}
\mathbf{\tilde{f}}(\mathbf{q}, \mathbf{v}) &= \mathbf{J}^{\text{T}}(\mathbf{q}) \mathbf{f}(\mathbf{q}, \mathbf{J}(\mathbf{q})\mathbf{v}) = \begin{bmatrix} \tilde{f} & f_{\delta_f} & f_{\delta_r} \end{bmatrix}^{\text{T}},\\
\tilde{f} &= \frac{1}{2} (\cos(\delta_f + \theta) + \cos(\delta_r + \theta)) f_{x} \\
&+ \frac{1}{2} (\sin(\delta_f + \theta) + \sin(\delta_r + \theta)) f_{y} \\
&+ A (\sin(\delta_f) - \sin(\delta_r)) f_{\theta} + r^{-1} f_{\varphi}.
\end{split}
\end{equation}

System \eqref{eq_dynamics_in_v} with \eqref{eq_tildeM}-\eqref{eq_tildefd} possesses several structural and boundedness properties that are essential for the subsequent Lyapunov-based practical stability analysis. In particular, these properties provide uniform inertia bounds, the energy-balance identity of the reduced mechanical model, input regularity, and explicit estimates of the Coriolis and uncertainty terms. These results form the basis for constructing an augmented Lyapunov function that yields a strict practical stability estimate and for deriving sufficient gain conditions under which the proposed controller guarantees practical stability and uniform ultimate boundedness of the combined velocity-error and integral-error state in the presence of configuration-dependent residual dynamics and bounded unmodelled effects.

\begin{property}\label{property_1}
The matrix $\mathbf{\tilde{M}}(\mathbf{q})$ is symmetric positive-definite for all configurations of $\mathbf{q}$.
\end{property}

\begin{IEEEproof}
Obviously the matrix $\mathbf{\tilde{M}}(\mathbf{q})$ is diagonal with strictly positive diagonal entries. Indeed, $2I_\delta > 0$, and $\tilde{M}_{11} \geq \frac{4I_{\varphi}}{r^2} > 0$. Hence $\mathbf{z}^{\text{T}} \mathbf{\tilde{M}}(\mathbf{q}) \mathbf{z} = \tilde{M}_{11} z_1^2 + 2I_\delta(z_2^2 + z_3^2) > 0$ for any $\mathbf{z} \neq \mathbf{0}$.
\end{IEEEproof}

\begin{property}\label{property_2}
There exist positive constants $a_1$ and $a_2$ such that
\begin{equation}
a_1 \|\mathbf{z}\|^2 \leq \mathbf{z}^{\text{T}} \mathbf{\tilde{M}}(\mathbf{q}) \mathbf{z} \leq a_2 \|\mathbf{z}\|^2,\quad \forall \mathbf{z}\in\mathbb{R}^3.
\end{equation}
\end{property}

\begin{IEEEproof}
Property \ref{property_1} implies
\begin{equation}
\begin{aligned}
a_1(\mathbf{q}) &= \lambda_{\min}\left\{ \mathbf{\tilde{M}}(\mathbf{q}) \right\} = \min \{\tilde{M}_{11}(\mathbf{q}),\, 2I_\delta\},\\ 
a_2(\mathbf{q}) &= \lambda_{\max}\left\{ \mathbf{\tilde{M}}(\mathbf{q}) \right\} = \max \{\tilde{M}_{11}(\mathbf{q}),\, 2I_\delta\}.
\end{aligned}
\end{equation}
The quantities $\lambda_{\min}\{ \cdot \}$ and $\lambda_{\max}\{ \cdot \}$ denote, respectively, the smallest and largest eigenvalues of the symmetric positive-definite matrix.

Using $-1 \leq \cos(\delta_f - \delta_r) \leq 1$ and $0 \leq (\sin(\delta_f) - \sin(\delta_r))^2 \leq 4$ it follows
\begin{equation}
\frac{4 I_{\varphi}}{r^2} \leq \tilde{M}_{11}(\mathbf{q}) \leq \frac{4 I_{\varphi}}{r^2} + m + 4 I A^2.
\end{equation}
Therefore uniform (i.e. configuration independent) bounds may be chosen as
\begin{equation}
\begin{aligned}
a_1 &= \min \Big\{\frac{4 I_{\varphi}}{r^2},\, 2I_\delta\Big\},\\ 
a_2 &= \max \Big\{\frac{4 I_{\varphi}}{r^2} + m + 4 I A^2,\, 2I_\delta\Big\}.
\end{aligned}
\end{equation}
\end{IEEEproof}

\begin{property}\label{property_3}
The matrix $\mathbf{\dot{\tilde{M}}}(\mathbf{q}) - 2\mathbf{\tilde{C}}(\mathbf{q}, \mathbf{\dot{q}})$ is skew-symmetric. 
\end{property}

\begin{IEEEproof}
Note that in \eqref{eq_tildeM} only $\tilde{M}_{11}$ is time dependent. Differentiating $\tilde{M}_{11}$ gives
\begin{equation}
\begin{aligned}
\dot{\tilde{M}}_{11} &= -\frac{m}{2} \sin(\delta_f - \delta_r) \left( \dot{\delta}_f - \dot{\delta}_r \right) \\
&+ 2I A^2 \left( \sin(\delta_f) - \sin(\delta_r) \right) \\
&\cdot \left( \cos(\delta_f) \dot{\delta}_f - \cos(\delta_r) \dot{\delta}_r \right).
\end{aligned}
\end{equation}
Comparing $\dot{\tilde{M}}_{11}$ with $\tilde{C}_{11}$ from \eqref{eq_tildeC} it is easy to determine that $\dot{\tilde{M}}_{11} = 2\tilde{C}_{11}$. Hence $\mathbf{\dot{\tilde{M}}}(\mathbf{q}) - 2\mathbf{\tilde{C}}(\mathbf{q}, \mathbf{\dot{q}}) = \mathrm{diag}(0,\, 0,\, 0) = \mathbf{0}$, which is trivially skew-symmetric.
\end{IEEEproof}

\begin{remark}\label{remark_from_prop1_prop3}
Note that Property \ref{property_1} and Property \ref{property_3} imply the energy-balance identity $\mathbf{\dot{\tilde{M}}}(\mathbf{q}) = \mathbf{\tilde{C}}(\mathbf{q}, \mathbf{\dot{q}}) + \mathbf{\tilde{C}}(\mathbf{q}, \mathbf{\dot{q}})^{\text{T}}$ and hence $\mathbf{\tilde{C}}(\mathbf{q}, \mathbf{\dot{q}}) = \frac{1}{2} \mathbf{\dot{\tilde{M}}}(\mathbf{q})$.
\end{remark}

\begin{property}\label{property_4}
There exists positive constant $b_c$ such that for all admissible $\mathbf{v} = \begin{bmatrix} v_w & \omega_f & \omega_r \end{bmatrix}^{\text{T}} = \begin{bmatrix} v_w & \dot{\delta}_f & \dot{\delta}_r \end{bmatrix}^{\text{T}}$,
\begin{equation}\label{eq_Cor_bound}
\| \mathbf{\tilde{C}}(\mathbf{q}, \mathbf{\dot{q}}) \mathbf{v} \| \leq b_c \| \mathbf{v} \|^2.
\end{equation}

Moreover, on the compact operating set satisfying $\delta_f, \delta_r \in D_\delta$ and $|\dot{\delta}_f|, |\dot{\delta}_r| \leq \dot{\delta}_{\max}$, the matrix $\mathbf{\tilde{C}}(\mathbf{q}, \mathbf{\dot{q}})$ is uniformly bounded as
\begin{equation}
\|\mathbf{\tilde{C}}(\mathbf{q}, \mathbf{\dot{q}})\| \leq c_c. \label{eq:Cbar_bound}
\end{equation}
\end{property}

\begin{IEEEproof}
Since $ \mathbf{\tilde{C}}(\mathbf{q}, \mathbf{\dot{q}}) = \mathrm{diag}(\tilde{C}_{11},\, 0,\, 0)$ one has $\| \mathbf{\tilde{C}}(\mathbf{q}, \mathbf{\dot{q}}) \mathbf{v} \|=|\tilde{C}_{11}|\, |v_w|$ and $\|\mathbf{\tilde{C}}(\mathbf{q}, \mathbf{\dot{q}})\| = |\tilde C_{11}|$. From closed form of $\tilde{C}_{11}$ in \eqref{eq_tildeC} and taking into account $ |\sin(\cdot)| \leq 1$ and $|\cos(\cdot)| \leq 1 $ it follows that
\begin{equation}
\begin{aligned}
|\tilde{C}_{11}| &\leq  2 I A^2 \big( |\dot{\delta}_f| + |\dot{\delta}_r| \big) + \frac{m}{4} \big( |\dot\delta_f|+|\dot\delta_r| \big) \\
&= \Big( 2 I A^2 + \frac{m}{4} \Big) \big( |\dot{\delta}_f| + |\dot{\delta}_r| \big).
\end{aligned}
\end{equation}
Thus
\begin{equation}
\begin{aligned}
\| \mathbf{\tilde{C}}(\mathbf{q}, \mathbf{\dot{q}}) \mathbf{v} \| &\leq \Big(2 I A^2 + \frac{m}{4}\Big) |v_w| \big(|\dot{\delta}_f| + |\dot{\delta}_r|\big)\\ 
&\leq \Big(2 I A^2 + \frac{m}{4}\Big) \left( \frac{1}{2} v_w^2 + \dot{\delta}_f^{2} + \dot{\delta}_r^{2} \right)\\ 
&\leq b_c \|\mathbf{v}\|^2,
\end{aligned}
\end{equation}
where Young's inequality and $\|\mathbf{v}\|^2 = v_w^2 + \dot{\delta}_f^{2} + \dot{\delta}_r^{2}$ are used. The explicit obvious choice is
\begin{equation}\label{eq_Cor_bound_b_c}
b_c = 2 I A^2 + \frac{m}{4}.
\end{equation}

Furthermore, from the preceding estimate, using $|\dot{\delta}_f|, |\dot{\delta}_r| \leq \dot{\delta}_{\max}$ gives
\begin{equation}
|\tilde C_{11}| \leq \left(4IA^2 + \frac{m}{2}\right) \dot{\delta}_{\max}.
\end{equation}
Thus, \eqref{eq:Cbar_bound} holds with the obvious choice of $c_c$ as
\begin{equation}
c_c = \left(4IA^2 + \frac{m}{2}\right) \dot{\delta}_{\max}. \label{eq:bc_cbar_choice}
\end{equation}
\end{IEEEproof}

\begin{remark}\label{rem_scaled_norm}
Since the reduced velocity vector $\mathbf{v} = \begin{bmatrix}v_w & \dot{\delta}_f & \dot{\delta}_r\end{bmatrix}^{\text{T}}$ contains both translational and steering-rate components, the norm $\| \mathbf{v} \|$ used in the reduced-coordinate estimates is interpreted as a scaled reduced-coordinate norm. In particular, the steering-rate components are understood through the characteristic length scale of the robot, so that $v_w$, $a\dot{\delta}_f$, and $a\dot{\delta}_r$ define a dimensionally consistent velocity measure. Consequently, \eqref{eq_Cor_bound} represents a reduced-coordinate induced-norm bound, rather than a direct dimensional comparison of unscaled physical quantities.
\end{remark}

\begin{property}\label{property_5}
The matrix $\mathbf{\tilde{B}}(\mathbf{q})$ is invertible for all configurations.
\end{property}

\begin{IEEEproof}
For matrix $\mathbf{\tilde{B}}(\mathbf{q})$ defined by \eqref{eq_tildeB} the last two diagonal entries are $2>0$. For the first diagonal entry $\tilde{B}_{11}$, each term inside the brackets is nonnegative: $(\sin(\delta_f) - \sin(\delta_r))^2 \geq 0$, $1 + \cos(\delta_f - \delta_r) \geq 0$, and the constant $4>0$. Therefore $\tilde{B}_{11} \geq \frac{1}{r} \cdot 4 > 0$.
Thus all diagonal entries of $\mathbf{\tilde{B}}(\mathbf{q})$ are strictly positive, which implies that $\mathbf{\tilde{B}}(\mathbf{q})$ is nonsingular with inverse $\mathbf{\tilde{B}}^{-1}(\mathbf{q}) = \mathrm{diag}(\tilde{B}_{11}^{-1},\, 1/2,\, 1/2)$.
\end{IEEEproof}

\begin{property}\label{property_6}
If there exist nonnegative constants $c_k$ and $d_k$ defined in Assumption \ref{assum_f} such that \eqref{eq_assum_fd} is satisfied then
\begin{equation}\label{eq1_property6}
\| \mathbf{\tilde{f}} \| \leq \tilde{c} + \tilde{d} \| \mathbf{v} \|,
\end{equation}
with $\tilde{c} = \sigma_{J} \|\mathbf{c}\|$ and $\tilde{d} = \sigma_{J}^{2} \|\mathbf{d}\|$ where $\mathbf{c}$ and $\mathbf{d}$ are as in \eqref{eq_assum_fd}. Here $\sigma_{J}$ denotes the uniform induced gain of \eqref{eq1c_kinem_compact} under the weighted norm.
\end{property}

\begin{IEEEproof}
A weighted norm on the configuration derivative space is introduced as 
\begin{equation}
\|\mathbf{\dot{q}}\|_{W} = \sqrt{\mathbf{\dot{q}}^{\text{T}} \mathbf{W} \mathbf{\dot{q}}}
\end{equation} 
with $\mathbf{W} = \mathrm{diag}(1,1,1,r^2,1,1)$ which measures $r\dot{\varphi}$ instead of $\dot{\varphi}$ and thus avoids the amplification induced by $\dot{\varphi}=v_w/r$. The compatible induced matrix norm of \eqref{eq1c_kinem_compact} is defined by 
\begin{equation}
\|\mathbf{J}\|_{W} = \sup_{\mathbf{v}\neq 0} \frac{\|\mathbf{J} \mathbf{v}\|_{W}}{\|\mathbf{v}\|} = \sqrt{\lambda_{\max} \{\mathbf{J}^{\text{T}} \mathbf{W} \mathbf{J}\}},
\end{equation}
and the corresponding uniform bound is set as 
\begin{equation}
\sigma_{J} = \sup_{\mathbf{q} \in \mathcal{X}}\|\mathbf{J}\|_{W}.
\end{equation}
By definition, $\mathbf{\tilde{f}}(\mathbf{q}, \mathbf{v}) = \mathbf{J}^{\text{T}}(\mathbf{q}) \mathbf{f}(\mathbf{q}, \mathbf{J}(\mathbf{q}) \mathbf{v})$. Using submultiplicativity and \eqref{eq_assum_fd}, 
\begin{equation}
\|\mathbf{\tilde{f}}\| \leq \|\mathbf{J}^{\text{T}}\|_{W} \|\mathbf{f}\| \leq \|\mathbf{J}\|_{W} (\|\mathbf{c}\| + \|\mathbf{d}\| \|\dot{\mathbf{q}}\|_{W}),
\end{equation}
and since 
\begin{equation}
\|\mathbf{\dot{q}}\|_{W} = \|\mathbf{J}\mathbf{v}\|_{W} \leq \|\mathbf{J}\|_{W} \|\mathbf{v}\|,
\end{equation} 
it follows that 
\begin{equation}
\|\mathbf{\tilde{f}}\| \leq \sigma_{J}\|\mathbf{c}\| + \sigma_{J}^2\|\mathbf{d}\| \|\mathbf{v}\|,
\end{equation}
which yields \eqref{eq1_property6} with $\tilde{c}=\sigma_{J}\|\mathbf{c}\|$ and $\tilde{d}=\sigma_{J}^2\|\mathbf{d}\|$. Moreover, the induced gain in satisfies 
\begin{equation}
\|\mathbf{J}\|_{W} = \sqrt{\lambda_{\max} \{\mathbf{J}^{\text{T}} \mathbf{W} \mathbf{J}\}} = \max\{ \sqrt{S}, 1\},
\end{equation}
where $S = (1+\cos(\delta_f-\delta_r))/2 + A^2 (\sin(\delta_f) - \sin(\delta_r))^2 + 1$. Therefore, using $(1+\cos(\cdot))/2 \leq 1$ and $(\sin(\delta_f) - \sin(\delta_r))^2 \leq 4$, the uniform bound becomes $\sigma_{J} \leq \sqrt{2 + 4A^2}$. 

Note that the bound $\sigma_J$ is interpreted in the scaled coordinate metric induced by the preceding norm definition. Accordingly, the factor $A$ enters the induced-gain bound through this scaling so that $\sigma_J$ remains dimensionally consistent.
\end{IEEEproof}

Although the aforementioned properties are well known for constrained mechanical systems, they are presented here in order to derive expressions for estimating the coefficients $a_1$, $a_2$, $b_c$, $c_c$, $\tilde{c}$, $\tilde{d}$ and $\sigma_J$. The derivation shows that these coefficients can be estimated as functions of the mobile robot parameters and are independent of the state variables. These estimates provide explicit bounds for the reduced dynamics and uncertainty terms, and they are therefore used in the subsequent Lyapunov analysis to derive constructive gain conditions for the proposed controller. In particular, they enable sufficient conditions to be stated under which the closed-loop error dynamics are practically stable and the combined velocity-error and integral-error state is uniformly ultimately bounded.

\section{Controller design based on Lyapunov stability analysis}\label{sec_ControllerDesign}

\subsection{Control problem formulation}

This section formulates the synthesis problem for a proportional-integral controller augmented with feedforward compensation in closed loop with the mobile robot model given in \eqref{eq_dynamics_in_v}-\eqref{eq_tildefd}. The kinematic outer loop generates feasible reference velocities, whereas the main focus is on the inner-loop velocity controller. The robustness objective is to ensure reliable tracking of these reference velocities in the presence of configuration-dependent residual dynamics and bounded unmodelled nonlinearities. Accordingly, the control design aims to obtain a Lyapunov-based practical stability estimate and to guarantee uniform ultimate boundedness of the combined velocity-error and integral-error state.

\subsubsection{Virtual kinematic controller}\label{sec_Kin_controller}

Let the reference pose be $x_d(t)$, $y_d(t)$, $\theta_d(t)$ with desired yaw-rate $\omega_d(t)=\dot{\theta}_d(t)$. Define the tracking errors 
\begin{equation}\label{eq1_virt_kin}
\begin{aligned}
e_x &= (x_d - x) \cos(\theta) + (y_d - y) \sin(\theta),\\ 
e_y &= -(x_d - x) \sin(\theta) + (y_d - y) \cos(\theta),\\ 
e_{\theta} &= \theta_d - \theta.
\end{aligned}
\end{equation} 
Following the kinematic error-feedback principle in \cite{KKMN90}, but using the body-frame translational velocity compatible with the reduced kinematics \eqref{eq_kinem_compact}-\eqref{eq1c_kinem_compact}, the desired body-frame velocity components and the virtual yaw-rate command are defined as
\begin{equation}\label{eq_v_wd}
\begin{aligned}
u_x &= \dot{x}_d\cos(\theta)+\dot{y}_d\sin(\theta)+k_x e_x,\\ 
u_y &= -\dot{x}_d\sin(\theta)+\dot{y}_d\cos(\theta)+k_y e_y,\\
\omega_{virt} &= \omega_d + k_\theta e_\theta,
\end{aligned}
\end{equation} 
with $k_x, k_y,k_\theta > 0$. In contrast to a unicycle-type parametrization, the commands $u_x$ and $u_y$ represent the desired translational velocity components in the robot body frame. If the steering commands are feasible and the inner loop tracks the reduced velocity reference generated below, the translational velocity produced by the reduced kinematics matches $[u_x \; u_y]^{\text{T}}$, while the yaw rate matches $\omega_{virt}$ up to saturation and inner-loop tracking errors. Hence, in the nominal unsaturated case, the outer-loop tracking dynamics inherit the local exponential stability properties of the kinematic error-feedback law and are input-to-state stable with respect to inner-loop tracking errors and saturation-induced kinematic mismatch \cite{KKMN90}.

Using \eqref{eq_kinem_compact}-\eqref{eq1c_kinem_compact}, $\dot{\theta} = v_w A (\sin(\delta_f) - \sin(\delta_r))$. The steering references are obtained directly from the reduced kinematics. Let $v_t = \sqrt{u_x^2+u_y^2}$, $\beta_d = \operatorname{atan2}(u_y,u_x)$. The front and rear steering angles are parametrized as
\begin{equation}
\delta_{f,d} = \beta_d + \gamma_d,\quad \delta_{r,d} = \beta_d - \gamma_d,
\end{equation}
where $\gamma_d$ is selected so that the yaw-rate command is compatible with the reduced kinematics. Indeed, from \eqref{eq_kinem_compact}-\eqref{eq1c_kinem_compact} and the preceding parametrization one obtains
\begin{equation}
\begin{aligned}
u_x &= v_{w,d}\cos(\beta_d)\cos(\gamma_d),\\
u_y &= v_{w,d}\sin(\beta_d)\cos(\gamma_d),\\
\omega_{virt} &= 2A v_{w,d}\cos(\beta_d)\sin(\gamma_d).
\end{aligned}
\end{equation}
Therefore, the steering-shape variable is computed as
\begin{equation}
\gamma_d = \operatorname{atan2} \left( \omega_{virt}, 2A v_t \cos(\beta_d) \right),
\end{equation}
The desired common wheel tangential velocity is then defined by
\begin{equation}
v_{w,d} = \frac{v_t}{\cos(\gamma_d)}.
\end{equation}
The steering angles are saturated and unwrapped so that $|\delta_{(\cdot),d}| \leq \delta_{\max}$. If saturation of $\gamma_d$, $\delta_{f,d}$, or $\delta_{r,d}$ occurs, the achievable yaw-rate command is recomputed from $\omega_{virt, sat} = A v_{w,d} \left( \sin(\delta_{f,d}) - \sin(\delta_{r,d}) \right)$, and the corresponding saturation-induced kinematic mismatch is treated as a bounded perturbation in the subsequent stability analysis.

\begin{remark}
The mapping used to compute $\beta_d$, $\gamma_d$, and $v_{w,d}$ is understood in a regularized implementation sense. In particular, when $v_t$ approaches zero, the direction angle $\beta_d$ is kept at its previous filtered value or computed using a small numerical threshold, while $\gamma_d$ is saturated such that $|\gamma_d|\leq \gamma_{\max}<\pi/2$. Hence, $|\cos(\gamma_d)|\geq \cos(\gamma_{\max})>0$, and the definition of $v_{w,d}$ remains well posed. Any discrepancy introduced by this regularization and saturation is included in the bounded kinematic mismatch considered in the stability analysis.
\end{remark}

Track $\delta_{f,d}$ and $\delta_{r,d}$ with first-order laws 
\begin{equation}
\omega_{f,d} = \dot{\delta}_{f,d}-k_\delta(\delta_f - \delta_{f,d}),\quad \omega_{r,d} = \dot{\delta}_{r,d}-k_\delta(\delta_r-\delta_{r,d}), \label{eq_omega_fd}
\end{equation} 
with $k_\delta>0$ and $|\omega_{\cdot,d}| \leq \dot{\delta}_{\max}$. The derivatives $\dot{\delta}_{f,d}$ and $\dot{\delta}_{r,d}$ are obtained by differentiating the smoothed steering references or by filtered numerical differentiation. Finally, the desired reduced velocity vector supplied to the inner-loop dynamic controller is
\begin{equation}\label{eq_des_vel}
\mathbf{v}_d = \begin{bmatrix} v_{w,d} & \omega_{f,d} & \omega_{r,d} \end{bmatrix}^{\text{T}} .
\end{equation}

\begin{figure*}[!t]
\centering
\scalebox{0.75}{\input{Figures/control_scheme_kin_dyn_ver3.tex}}
\caption{Block diagram of the overall control system for the considered mobile robot.} \label{fig2_control_schematic}
\end{figure*}

\subsubsection{Dynamic controller}

A PI-type control law with feedforward compensation is considered in the following form
\begin{align}
\begin{split}
\bm{\tau} &= \mathbf{\tilde{B}}^{-1}(\mathbf{q}) \left( -{\mathbf{K}}_P \mathbf{e}_v - \mathbf{K}_I \bm{\eta}\right. \\
&+ \left. \mathbf{\tilde{M}}(\mathbf{q}_d)\mathbf{\dot{v}}_d + \mathbf{\tilde{C}}(\mathbf{q}_d, \mathbf{\dot{q}}_d)\mathbf{v}_d \right), 
\end{split}\label{PI_control_1}\\
\bm{\dot{\eta}} &= \mathbf{e}_v,\label{PI_control_2}
\end{align}
where $\mathbf{v}_d$ is the vector of desired reference velocities defined with \eqref{eq_des_vel}, $\mathbf{e}_v = \mathbf{v} - \mathbf{v}_d$ denotes the velocity tracking error vector, $\bm{\eta} \in \mathbb{R}^{3}$ is the integrator state, and $\mathbf{K}_P, \; \mathbf{K}_I \in \mathbb{R}^{3 \times 3}$ are positive-definite diagonal gain matrices. The matrices $\mathbf{\tilde{M}}$, $\mathbf{\tilde{C}}$ and $\mathbf{\tilde{B}}$, are defined by \eqref{eq_tildeM}, \eqref{eq_tildeC}, and \eqref{eq_tildeB}, respectively.

Considered robust control problem of a mobile robot is defined as follows.
\begin{problem}
Given the dynamic model \eqref{eq_dynamics_in_v}-\eqref{eq_tildefd} satisfying Properties \ref{property_1}-\ref{property_5} and subject to bounded uncertainty $\mathbf{\tilde{f}}(\mathbf{q}, \mathbf{v})$ satisfying Property \ref{property_6}, and given the desired reference velocity signal $\mathbf{v}_d(t)$ generated by the virtual kinematic controller \eqref{eq1_virt_kin}-\eqref{eq_omega_fd}, determine diagonal positive-definite gains ${\mathbf{K}}_P$ and ${\mathbf{K}}_I$ in \eqref{PI_control_1} such that the inner-loop velocity-error dynamics admit an augmented Lyapunov function yielding a practical stability estimate with strict decay terms. In particular, the objective is to guarantee uniform ultimate boundedness of the combined inner-loop error state $\mathbf{z} = \begin{bmatrix} \mathbf{e}_v^{\text{T}} & \bm{\eta}^{\text{T}}\end{bmatrix}^{\text{T}}$ under configuration-dependent residual dynamics and bounded unmodelled effects, for bounded feasible reference signals and bounded kinematic mismatch. In the nominal case, the same design should recover asymptotic stability of the inner-loop error dynamics while preserving the simple PI-like controller structure required for real-time implementation.
\end{problem}

\begin{remark}
Property \ref{property_6} models $\mathbf{\tilde{f}}(\mathbf{q},\mathbf{v})$ as a bounded memoryless nonlinearity on the operating set. Consequently, under nonzero residual dynamics and uncertainty terms, the control objective is formulated in terms of practical stability and uniform ultimate boundedness of the combined error state. Asymptotic convergence to the origin is recovered in the nominal case, when the residual and uncertainty induced terms vanish.
\end{remark}

The overall control system for the considered mobile robot is shown in Fig. \ref{fig2_control_schematic}. The outer-loop transforms the reference pose into feasible reduced velocity commands and desired steering motions, which are then tracked by the inner-loop. The inner loop applies the proposed PI-like controller with model-based feedforward compensation, using feedback from the robot states to generate actuator torques while explicitly accounting for the bounded uncertainty term $\mathbf{\tilde{f}}$.

\subsection{Stability analysis}

This section develops a Lyapunov-based stability analysis of the proposed PI-like controller. By using the residual error dynamics and the structural properties of the reduced robot model, sufficient gain conditions are derived for practical stability and uniform ultimate boundedness of the combined velocity-error and integral-error state in the presence of bounded uncertainties and configuration-dependent residual dynamics.

\subsubsection{Residual dynamics}

For a given desired velocity $\mathbf{v}_d(t)$ with associated desired state $\mathbf{q}_d(t)$ define
\begin{equation}\label{res_eq1}
\mathbf{u}_d = \mathbf{\tilde{M}}(\mathbf{q}_d) \mathbf{\dot{v}}_d + \mathbf{\tilde{C}}(\mathbf{q}_d, \mathbf{\dot{q}}_d) \mathbf{v}_d.
\end{equation}
Let the residual be
\begin{equation}\label{res_eq2}
\mathbf{r} = \mathbf{\tilde{M}}(\mathbf{q}) \mathbf{\dot{v}}_d + \mathbf{\tilde{C}}(\mathbf{q}, \mathbf{\dot{q}}) \mathbf{v}_d + \mathbf{\tilde{f}}(\mathbf{q}, \mathbf{v}_d) - \mathbf{u}_d.
\end{equation}
Then \eqref{eq_dynamics_in_v} can be rewritten in the following form
\begin{equation}\label{res_eq3}
\begin{aligned}
\mathbf{\tilde{M}}(\mathbf{q}) \mathbf{\dot{e}}_v + \mathbf{\tilde{C}}(\mathbf{q}, \mathbf{\dot{q}}) \mathbf{e}_v + \mathbf{r} &= \mathbf{\tilde{B}}(\mathbf{q}) {\bm{\tau}} - \mathbf{u}_d \\
&+ \mathbf{\tilde{f}}(\mathbf{q}, \mathbf{v}_d) - \mathbf{\tilde{f}}(\mathbf{q}, \mathbf{v}).
\end{aligned}
\end{equation}

Recall the structure of $\mathbf{\tilde{M}}(\mathbf{q})$ and $\mathbf{\tilde{C}}(\mathbf{q}, \mathbf{\dot{q}})$ from \eqref{eq_tildeM} and \eqref{eq_tildeC}, respectively, with dependencies through $\delta_f, \delta_r$ and $\dot{\delta}_f, \dot{\delta}_r$ on the operating domain $\delta_f, \delta_r \in \mathcal{D}_{\delta}$ and $|\dot{\delta}_f|,\; |\dot{\delta}_r| \leq \dot{\delta}_{\max}$. Let the reference signals be bounded by $V_d = \sup_t\|\mathbf{v}_d(t)\|$, $A_d = \sup_t \|\mathbf{\dot{v}}_d(t)\|$ and there are also the following properties.

\begin{property}\label{prop_LM_LC}
There exist positive Lipschitz constants $L_M$, $L_{C_1}$, $L_{C_2}$ such that
\begin{equation}
\|\mathbf{\tilde{M}}(\mathbf{q}_1) - \mathbf{\tilde{M}}(\mathbf{q}_2)\| \leq L_M \|\mathbf{q}_1 - \mathbf{q}_2\|, \label{eq1_LM_LC}
\end{equation}
\begin{equation}
\begin{aligned}
\|\mathbf{\tilde{C}}(\mathbf{q}_1, \mathbf{\dot{q}}_1) - \mathbf{\tilde{C}}(\mathbf{q}_2, \mathbf{\dot{q}}_2)\| &\leq L_{C_1} \|\mathbf{q}_1 - \mathbf{q}_2\| \\
&+ L_{C_2} \|\mathbf{\dot{q}}_1 - \mathbf{\dot{q}}_2\|, \label{eq2_LM_LC}
\end{aligned}
\end{equation}
for all $\mathbf{q}_1$, $\mathbf{\dot{q}}_1$, $\mathbf{q}_2$, $\mathbf{\dot{q}}_2$ in the operating domain.
\end{property}

\begin{IEEEproof}
Since $\mathbf{\tilde{M}}(\mathbf{q}_1)$ is diagonal matrix (see \eqref{eq_tildeM}) its induced $2$-norm equals the maximum absolute diagonal entry. Because only $\tilde{M}_{11}$ depends on $\delta_f$ and $\delta_r$, the Lipschitz constant $L_M$ is given by the supremum of the Euclidean norm of the gradient of the scalar map $\tilde{M}_{11}$. By the mean-value theorem and the bounds $|\sin(\cdot)| \leq 1$, $|\cos(\cdot)| \leq 1$, and $|\sin(\delta_f) - \sin(\delta_r)| \leq 2$, one obtains
\begin{equation}\label{eq3_LM_LC}
\begin{aligned}
L_M &= \sup_{\delta_f, \delta_r \in \mathcal D_\delta} \left\| \left[ {\textstyle \frac{\partial \tilde{M}_{11}}{\partial \delta_f} \;\; \frac{\partial \tilde{M}_{11}}{\partial \delta_r} } \right]^{\text{T}} \right\|\\ &= \sup_{\delta_f, \delta_r \in \mathcal D_\delta} {\textstyle \sqrt{\left(\frac{\partial \tilde{M}_{11}}{\partial \delta_f}\right)^2 + \left(\frac{\partial \tilde{M}_{11}}{\partial \delta_r}\right)^2}},\\
L_M &\leq \sqrt{2}\left(\frac{m}{2} + 4 I A^2\right).
\end{aligned}
\end{equation}

Similarly, for $L_{C_1}$ and $L_{C_2}$, the scalar $\tilde{C}_{11}$ (see \eqref{eq_tildeC}) is differentiated with respect to angles $\delta_f$, $\delta_r$ and steering rates $\dot{\delta}_f$ and $\dot{\delta}_r$, respectively. The Euclidean norms of the resulting derivative vectors are bounded using $|\sin(\cdot)| \leq 1$, $|\cos(\cdot)| \leq 1$ and $|\dot{\delta}_{(\cdot)}| \leq \dot{\delta}_{\max}$, and their suprema are adopted as $L_{C_1}$ and $L_{C_2}$, 
\begin{equation}\label{eq4_LM_LC}
\begin{aligned}
L_{C_1} &= \sup_{\delta_f, \delta_r\in\mathcal D_\delta, |\dot{\delta}_{(\cdot)}| \leq \dot{\delta}_{\max}} \left\| \left[ \frac{\partial \tilde{C}_{11}}{\partial \delta_f} \;\; \frac{\partial \tilde{C}_{11}}{\partial \delta_r} \right]^{\text{T}} \right\| \\
&= \sup_{\delta_f, \delta_r\in\mathcal D_\delta, |\dot{\delta}_{(\cdot)}| \leq \dot{\delta}_{\max}} { \textstyle \sqrt{\left(\frac{\partial \tilde{C}_{11}}{\partial \delta_f}\right)^2 + \left(\frac{\partial \tilde{C}_{11}}{\partial \delta_r}\right)^2}},\\
L_{C_1} &\leq \sqrt{2}\left(\frac{m}{4} + 4 I A^2 \right) \dot{\delta}_{\max},\\
L_{C_2} &= \sup_{\delta_f, \delta_r\in\mathcal D_\delta} \left\| \left[ \frac{\partial \tilde{C}_{11}}{\partial \dot{\delta}_f} \;\; \frac{\partial \tilde{C}_{11}}{\partial \dot{\delta}_r} \right]^{\text{T}} \right\|\\ &= \sup_{\delta_f, \delta_r\in\mathcal D_\delta} { \textstyle \sqrt{\left(\frac{\partial \tilde{C}_{11}}{\partial \dot{\delta}_f}\right)^2 + \left(\frac{\partial \tilde{C}_{11}}{\partial \dot{\delta}_r}\right)^2}},\\
L_{C_2} &\leq \sqrt{2} \left(\frac{m}{4} + 2 I A^2 \right),
\end{aligned}
\end{equation}
reflecting worst-case local slopes in configuration and velocity directions.
\end{IEEEproof}

\begin{property}\label{prop_F_d_v}
There exists a constant $d_v>0$ such that
\begin{equation}\label{eq1_prop_F_d_v}
\left\|\mathbf{\tilde{f}}(\mathbf{q}, \mathbf{v}_d) - \mathbf{\tilde{f}}(\mathbf{q}, \mathbf{v})\right\| \leq d_v \|\mathbf{v} - \mathbf{v}_d\|.	
\end{equation}
An explicit choice is 
\begin{equation}
d_v = \sup_{\mathbf{q}, \mathbf{\dot{q}} \in \mathcal{X} \times \mathcal{D}}\ \left\| \mathbf{J}(\mathbf{q})^{\text{T}} \frac{\partial \mathbf{f}}{\partial \mathbf{\dot{q}}} \mathbf{J}(\mathbf{q}) \right\|,
\end{equation} 
and the computable conservative bound 
\begin{equation}
d_v \leq \left(\sup_{\mathbf{q} \in \mathcal{X}}\|\mathbf{J}(\mathbf{q})\|\right)^{2} \left(\sup_{\mathbf{q}, \mathbf{\dot{q}} \in \mathcal{X} \times \mathcal{D}} \left\| \frac{\partial \mathbf{f}}{\partial \mathbf{\dot{q}}} \right\|\right) \leq \sigma^2_{J} L_{f_2},
\end{equation}
where $\mathbf{f}$ satisfies Assumption \ref{assum_f}, $\mathbf{J}$ is Jacobian \eqref{eq1c_kinem_compact}, $\sigma_{J}$ is bounded as in Property \ref{property_6}, and $L_{f_2}$ is defined as in Assumption \ref{assum_f}. Under Assumption \ref{assum_f}, $L_{f_2}$ is an upper bound on the induced spectral norm of $\partial \mathbf{f}/\partial \mathbf{\dot{q}}$ over $\mathcal{X} \times \mathcal{D}$.
\end{property}

\begin{IEEEproof}
Fix $\mathbf{q} \in \mathcal{X} \subset \mathbb{R}^{6}$ and define $\mathbf{G}(\mathbf{q}, \mathbf{v}) = \mathbf{\tilde{f}}(\mathbf{q}, \mathbf{v}) = \mathbf{J}^{\text{T}}(\mathbf{q}) \mathbf{f}(\mathbf{q}, \mathbf{J}(\mathbf{q}) \mathbf{v})$. By Assumption \ref{assum_f}, $\mathbf{G}$ is $C^1$ and the chain rule yields 
\begin{equation}
\frac{\partial \mathbf{G}}{\partial \mathbf{v}} = \mathbf{J}^{\text{T}}(\mathbf{q}) \frac{\partial \mathbf{f}}{\partial \mathbf{\dot{q}}} \mathbf{J}(\mathbf{q}).
\end{equation} 
By the mean-value theorem, there exists $\bm{\xi}$ on the line segment between $\mathbf{v}$ and $\mathbf{v}_d$ such that 
\begin{equation}
\begin{aligned}
\|\mathbf{\tilde{f}}(\mathbf{q}, \mathbf{v}_d) - \mathbf{\tilde{f}}(\mathbf{q}, \mathbf{v})\| &= \|\mathbf{G}(\mathbf{q}, \mathbf{v}_d) - \mathbf{G}(\mathbf{q}, \mathbf{v})\|\\ 
&\leq \left\| \left. \frac{\partial \mathbf{G}}{\partial \mathbf{v}}\right|_{\mathbf{v}=\bm{\xi}}\right\| \|\mathbf{v} - \mathbf{v}_d\|.
\end{aligned}
\end{equation}
Taking the supremum over $\mathbf{q}, \bm{\xi} \in \mathcal{X}$ yields the definition of $d_v$. Finally, using submultiplicativity of the induced operator norm, 
\begin{equation}
\left\| \mathbf{J}^{\text{T}}(\mathbf{q}) \frac{\partial \mathbf{f}}{\partial \mathbf{\dot{q}}} \mathbf{J}(\mathbf{q}) \right\| \leq \left\| \mathbf{J}(\mathbf{q}) \right\|^{2} \left\| \frac{\partial \mathbf{f}}{\partial \mathbf{\dot{q}}} \right\|,
\end{equation}
and taking suprema over $\mathcal X$ gives a bound on $d_v$.
\end{IEEEproof}

\begin{property}\label{prop_kin_bound}
For all $\mathbf{q}, \mathbf{q}_d, \mathbf{v}, \mathbf{v}_d \in \mathcal{X} \times \mathcal{X} \times \mathbb{R}^3 \times \mathbb{R}^3$, kinematic difference is bounded with
\begin{equation}\label{eq_kin_bound}
\|\mathbf{\dot{q}} - \mathbf{\dot{q}}_d\| \leq \sigma_{J} \|\mathbf{e}_v\| + \sigma_{\partial J} V_d \|\mathbf{q} - \mathbf{q}_d\|,
\end{equation}
where $\sigma_{J}$ is bounded as in Property \ref{property_6}, and $\sigma_{\partial J} \leq \sqrt{3/2 + 2A^2}$ where $A$ is defined in \eqref{eq2c_kinem_compact}.
\end{property}

\begin{IEEEproof}
From \eqref{eq_kinem_compact}, 
\begin{equation}
\begin{aligned}
\mathbf{\dot{q}} - \mathbf{\dot{q}}_d &= \mathbf{J}(\mathbf{q}) \mathbf{v} - \mathbf{J}(\mathbf{q}_d) \mathbf{v}_d \\
&= \mathbf{J}(\mathbf{q}) \mathbf{e}_v + ( \mathbf{J}(\mathbf{q}) - \mathbf{J}(\mathbf{q}_d) ) \mathbf{v}_d.
\end{aligned}
\end{equation}
Taking the Euclidean norm and using the triangle inequality, 
\begin{equation}
\begin{aligned}
\|\mathbf{\dot{q}} - \mathbf{\dot{q}}_d\| &\leq \|\mathbf{J}(\mathbf{q}) \mathbf{e}_v\| + \|(\mathbf{J}(\mathbf{q}) - \mathbf{J}(\mathbf{q}_d)) \mathbf{v}_d\|\\ 
&\leq \|\mathbf{J}(\mathbf{q})\| \|\mathbf{e}_v\| + \|\mathbf{J}(\mathbf{q}) - \mathbf{J}(\mathbf{q}_d)\| V_d.
\end{aligned}
\end{equation}
For the first term, use $\|\mathbf{J}(\mathbf{q})\| \leq \sigma_{J}$ to bound $\|\mathbf{J}(\mathbf{q})\mathbf{e}_v\|$. For the second term, by the mean-value estimate for matrix valued $C^1$ maps (Fr\'echet differentiability) there exists $\bm{\xi}$ on the segment joining $\mathbf{q}$ and $\mathbf{q}_d$ such that 
\begin{equation}
\begin{aligned}
\|\mathbf{J}(\mathbf{q}) - \mathbf{J}(\mathbf{q}_d)\| &\leq \sup_{\mathbf{q} \in [\mathbf{q}, \mathbf{q}_d]} \left\| \frac{\partial \mathbf{J}(\bm{\xi})}{\partial \mathbf{q}} \right\| \|\mathbf{q} - \mathbf{q}_d\|\\
&\leq \sigma_{\partial J} \|\mathbf{q} - \mathbf{q}_d\|.
\end{aligned}
\end{equation}
Substituting these bounds yields \eqref{eq_kin_bound}. 

For deriving the bound of $\sigma_{\partial J}$, the submultiplicativity, definition of the Fr\'echet derivative (the linear approximation) of a matrix valued function and its induced operator norm, triangle inequality and Cauchy-Schwarz-based quadratic bound are used, and based on elements of Jacobian matrix in \eqref{eq1c_kinem_compact}, yields 
\begin{equation}
\begin{aligned}
\sigma_{\partial J} &= \sup_{\mathbf{q} \in [\mathbf{q}, \mathbf{q}_d]} \left\| \frac{\partial \mathbf{J}}{\partial \mathbf{q}} \right\| = \sup_{\|\Delta \mathbf{q}\|=1} \left\| \sum_k \mathbf{J}_{,k} \Delta k \right\|\\ 
&\leq \left( \sum_k \| \mathbf{J}_{,k} \|^{2} \right)^{1/2} \leq \sqrt{3/2 + 2A^2},
\end{aligned}
\end{equation}
for $k \in \{ \theta,\; \delta_f,\; \delta_r \}$, where $\mathbf{J}_{,k}$ denotes the directional (Fr\'echet) derivative of $\mathbf{J}$ evaluated at $\mathbf{q}$ along the $k$-th coordinate direction of the configuration space.
\end{IEEEproof}

Using Properties \ref{property_6}, \ref{prop_LM_LC} and \ref{prop_kin_bound}, the norm of the residual defined by \eqref{res_eq2} can be estimated as follows
\begin{equation} \label{eq_R_full}
\begin{aligned}
\|\mathbf{r}\| &\leq \underbrace{\left(L_M\|\mathbf{\dot{v}}_d\| + L_{C_1}\|\mathbf{v}_d\| + L_{C_2} \sigma_{\partial J} \|\mathbf{v}_d\|^{2}\right)}_{A_q}\\ 
&\cdot \|\mathbf{q} - \mathbf{q}_d\| + \underbrace{L_{C_2}\sigma_J \|\mathbf{v}_d\|}_{A_v} \|\mathbf{e}_v\| + \underbrace{\left(\tilde{c} + \tilde{d} \|\mathbf{v}_d\|\right)}_{A_c}.
\end{aligned}
\end{equation}
Recalling the assumption that the desired references satisfy $\|\mathbf{v}_d(t)\|\ \leq V_d$, $\|\mathbf{\dot{v}}_d(t)\|\ \leq A_d$ for all $t \geq 0$, one can make the above coefficients explicit and time–invariant as follows
\begin{equation}\label{eq_R_coeffs}
\begin{aligned}
A_q &= L_M A_d + L_{C_1} V_d + L_{C_2} \sigma_{\partial J} V_d^2,\\ 
A_v &= L_{C_2} \sigma_J V_d,\\ 
A_c &= \tilde{c} + \tilde{d}\ V_d.
\end{aligned}
\end{equation}

\subsubsection{Construction of an augmented Lyapunov function for the inner-loop error state}

First, a baseline energy-like Lyapunov function is constructed following an approach similar to those reported in \cite{MKL23a, KZMJL16}. This baseline function is then augmented by a small cross term in order to obtain a Lyapunov function candidate for the state composed of the velocity error and the integral error.

Substituting \eqref{PI_control_1} into \eqref{res_eq3} yields the closed-loop equation in the following form
\begin{equation}
\begin{aligned}
\mathbf{\tilde{M}}(\mathbf{q}) \mathbf{\dot{e}}_v + \mathbf{\tilde{C}}(\mathbf{q}, \mathbf{\dot{q}}) \mathbf{e}_v + \mathbf{r} &= -{\mathbf{K}}_P \mathbf{e}_v - \mathbf{K}_I \bm{\eta}\\ 
&+ \mathbf{\tilde{f}}(\mathbf{q}, \mathbf{v}_d) - \mathbf{\tilde{f}}(\mathbf{q}, \mathbf{v}). \label{closed_loop_eq1}
\end{aligned}
\end{equation}

Then, treating the velocities as the output variables, \eqref{closed_loop_eq1} is multiplied by $\mathbf{e}^{\text{T}}_v$ to form the corresponding inner product
\begin{equation}
\begin{aligned}
&\mathbf{e}^{\text{T}}_v \mathbf{\tilde{M}}(\mathbf{q}) \mathbf{\dot{e}}_v + \mathbf{e}^{\text{T}}_v \mathbf{\tilde{C}}(\mathbf{q}, \mathbf{\dot{q}}) \mathbf{e}_v + \mathbf{e}^{\text{T}}_v \mathbf{r} =\\ 
&- \mathbf{e}^{\text{T}}_v {\mathbf{K}}_P \mathbf{e}_v - \mathbf{e}^{\text{T}}_v \mathbf{K}_I \bm{\eta} + \mathbf{e}^{\text{T}}_v \mathbf{\tilde{f}}(\mathbf{q}, \mathbf{v}_d) - \mathbf{e}^{\text{T}}_v \mathbf{\tilde{f}}(\mathbf{q}, \mathbf{v}). \label{lyap_eq1}\\
\end{aligned}
\end{equation}
Note that certain terms in \eqref{lyap_eq1} may be written as follows
\begin{align}
\mathbf{e}^{\text{T}}_v \mathbf{\tilde{M}}(\mathbf{q}) \mathbf{\dot{e}}_v &= \frac{d}{dt} \left( \frac{1}{2} \mathbf{e}^{\text{T}}_v \mathbf{\tilde{M}}(\mathbf{q}) \mathbf{e}_v \right) - \frac{1}{2} \mathbf{e}^{\text{T}}_v \mathbf{\dot{\tilde{M}}}(\mathbf{q}) \mathbf{e}_v,\label{lyap_eq2a}\\
\mathbf{e}^{\text{T}}_v \mathbf{K}_I \bm{\eta} &= \frac{d}{dt} \left( \frac{1}{2} \bm{\eta}^{\text{T}} \mathbf{K}_I \bm{\eta} \right),\label{lyap_eq2b}
\end{align}
with $\mathbf{K}_I = \mathbf{K}_I^{\text{T}}$ where \eqref{PI_control_2} has been used in \eqref{lyap_eq2b}. Based on the preceding terms and Property \ref{property_3}, the following is obtained from \eqref{lyap_eq1}
\begin{equation}
\begin{aligned}
&\frac{d}{dt} \left( \frac{1}{2} \mathbf{e}^{\text{T}}_v \mathbf{\tilde{M}}(\mathbf{q}) \mathbf{e}_v + \frac{1}{2} \bm{\eta}^{\text{T}} \mathbf{K}_I \bm{\eta} \right) =\\ 
&- \mathbf{e}^{\text{T}}_v {\mathbf{K}}_P \mathbf{e}_v - \mathbf{e}^{\text{T}}_v \mathbf{r} + \mathbf{e}^{\text{T}}_v \left(\mathbf{\tilde{f}}(\mathbf{q}, \mathbf{v}_d) - \mathbf{\tilde{f}}(\mathbf{q}, \mathbf{v})\right). \label{lyap_eq3}\\
\end{aligned}
\end{equation}
The expression in parentheses on the left-hand side of the nonlinear differential form \eqref{lyap_eq3} defines a baseline storage Lyapunov function candidate
\begin{equation}
V(\mathbf{e}_v, \bm{\eta}, \mathbf{q}) = \frac{1}{2} \mathbf{e}^{\text{T}}_v \mathbf{\tilde{M}}(\mathbf{q}) \mathbf{e}_v + \frac{1}{2} \bm{\eta}^{\text{T}} \mathbf{K}_I \bm{\eta}, \label{eq:V_base_new}
\end{equation}
whereas the right-hand side defines a candidate expression for its time derivative
\begin{equation}
\begin{aligned}
\frac{d}{dt} \left( V(\mathbf{e}_v, \bm{\eta}, \mathbf{q}) \right) &= - \mathbf{e}^{\text{T}}_v {\mathbf{K}}_P \mathbf{e}_v - \mathbf{e}^{\text{T}}_v \mathbf{r}\\ 
&+ \mathbf{e}^{\text{T}}_v \left(\mathbf{\tilde{f}}(\mathbf{q}, \mathbf{v}_d) - \mathbf{\tilde{f}}(\mathbf{q}, \mathbf{v})\right). \label{eq:Vdot_base}
\end{aligned}
\end{equation}

\begin{corollary}[Quadratic bounds on the baseline storage function] \label{corollary_energy_bounds}
Let $\mathbf{z} = \begin{bmatrix} \mathbf{e}_v^{\text{T}} & \bm{\eta}^{\text{T}} \end{bmatrix}^{\text{T}} \in \mathbb{R}^6$ and let $V$ be defined by \eqref{eq:V_base_new}. Then, for all admissible $q$,
\begin{equation}
\underline{\alpha}_V \|\mathbf{z}\|^2 \leq V(\mathbf{e}_v, \bm{\eta}, \mathbf{q}) \leq \overline{\alpha}_V \|\mathbf{z}\|^2, \label{eq:V_quadratic_bounds}
\end{equation}
where $\underline{\alpha}_V = \frac{1}{2} \min \left\{a_1, \lambda_{\min} \{\mathbf{K}_I\}\right\}$, $\overline{\alpha}_V = \frac{1}{2} \max \left\{a_2, \lambda_{\max} \{\mathbf{K}_I\}\right\}$. Hence, $V$ is positive definite and radially unbounded with respect to $z$ on the operating set.
\end{corollary}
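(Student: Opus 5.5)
The proof splits $V$ in \eqref{eq:V_base_new} into its velocity-error part and its integral-error part, bounds each with a Rayleigh-quotient argument, and then adds the two estimates.

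\textbf{Velocity-error term.} Property \ref{property_2} gives, for every admissible $\mathbf{q}$,
\begin{equation*}
a_1\|\mathbf{e}_v\|^2 \leq \mathbf{e}_v^{\text{T}}\mathbf{\tilde{M}}(\mathbf{q})\mathbf{e}_v \leq a_2\|\mathbf{e}_v\|^2 .
\end{equation*}
The constants $a_1$ and $a_2$ are configuration independent. This is exactly what makes the final bounds uniform in $\mathbf{q}$.

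\textbf{Integral-error term.} $\mathbf{K}_I$ is diagonal and positive definite, hence symmetric. The Rayleigh quotient therefore gives
\begin{equation*}
\lambda_{\min}\{\mathbf{K}_I\}\|\bm{\eta}\|^2 \leq \bm{\eta}^{\text{T}}\mathbf{K}_I\bm{\eta} \leq \lambda_{\max}\{\mathbf{K}_I\}\|\bm{\eta}\|^2 .
\end{equation*}

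\textbf{Combining.} Multiply both estimates by $\tfrac12$ and add them. On the lower side, replace each coefficient by $\min\{a_1,\lambda_{\min}\{\mathbf{K}_I\}\}$. On the upper side, replace each by $\max\{a_2,\lambda_{\max}\{\mathbf{K}_I\}\}$. Then use the identity $\|\mathbf{z}\|^2=\|\mathbf{e}_v\|^2+\|\bm{\eta}\|^2$ for the stacked vector $\mathbf{z}$. This yields \eqref{eq:V_quadratic_bounds} with the stated $\underline{\alpha}_V$ and $\overline{\alpha}_V$.

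\textbf{Positive definiteness and radial unboundedness.} By Property \ref{property_2}, $a_1=\min\{4I_\varphi/r^2,2I_\delta\}>0$, and $\lambda_{\min}\{\mathbf{K}_I\}>0$ because $\mathbf{K}_I$ is positive definite. Hence $\underline{\alpha}_V>0$. The lower bound then shows that $V>0$ for $\mathbf{z}\neq\mathbf{0}$ and that $V\to\infty$ as $\|\mathbf{z}\|\to\infty$, uniformly in $\mathbf{q}$. The upper bound shows $V$ is decrescent.

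There is no genuine obstacle. The only point needing care is that the bounds must hold uniformly in $\mathbf{q}$ on the operating set, and this follows directly from the configuration-independent constants in Property \ref{property_2}.
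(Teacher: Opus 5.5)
Your proposal is correct and follows essentially the same route as the paper. The paper's proof also uses Property~\ref{property_2} for the inertia term and the eigenvalue (Rayleigh) bounds for $\mathbf{K}_I=\mathbf{K}_I^{\text{T}}>0$, then combines them via \eqref{eq:V_base_new} with $\|\mathbf{z}\|^2=\|\mathbf{e}_v\|^2+\|\bm{\eta}\|^2$; your explicit check that $\underline{\alpha}_V>0$ usefully spells out the positive-definiteness and radial-unboundedness conclusion that the paper leaves implicit.
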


\begin{IEEEproof}
The result follows directly from Property \ref{property_2} and from $\mathbf{K}_I = \mathbf{K}_I^{\text{T}} > 0$, which imply
\begin{equation}
a_1 \|\mathbf{e}_v\|^2 \leq \mathbf{e}_v^{\text{T}} \mathbf{\tilde{M}}(\mathbf{q}) \mathbf{e}_v \leq a_2 \|\mathbf{e}_v\|^2
\end{equation}
and
\begin{equation}
\lambda_{\min} \{\mathbf{K}_I\} \|\bm{\eta}\|^2 \leq \bm{\eta}^{\text{T}} \mathbf{K}_I \bm{\eta} \leq \lambda_{\max} \{\mathbf{K}_I\} \|\bm{\eta}\|^2.
\end{equation}
Combining these inequalities with \eqref{eq:V_base_new} yields \eqref{eq:V_quadratic_bounds}.
\end{IEEEproof}

The function $V$ provides a natural energy-like measure of the inner-loop error system. However, \eqref{eq:Vdot_base} does not contain an explicit negative-definite term in $\|\bm{\eta}\|^2$. To obtain a practical Lyapunov result for the combined inner-loop velocity-error and integral-error state $\mathbf{z} = \begin{bmatrix} \mathbf{e}_v^{\text{T}} & \bm{\eta}^{\text{T}} \end{bmatrix}^{\text{T}}$, the function $V$ is augmented by a small cross term.

\begin{proposition}[Augmented practical Lyapunov function with strict decay terms for the inner-loop error state]\label{prop_neg-semi}
Let
\begin{equation}
\mu_0 = \lambda_{\min}\{\mathbf{K}_P\} - A_v - d_v, \label{eq:mu0_def}
\end{equation}
and assume $\mu_0 > 0$. Define
\begin{equation}
W(\mathbf{e}_v, \bm{\eta},q) = V(\mathbf{e}_v, \bm{\eta},q) + \delta \mathbf{e}_v^{\text{T}} \mathbf{\tilde{M}}(\mathbf{q}) \bm{\eta}, \quad \delta > 0, \label{eq:W_def}
\end{equation}
and let
\begin{equation}
\chi = c_c + \lambda_{\max}\{\mathbf{K}_P\} + d_v + A_v. \label{eq:chi_def}
\end{equation}
If $\delta$ is chosen such that
\begin{equation}
\begin{split}
0 < \delta < \min &\left\{ \frac{\min \{a_1, \lambda_{\min} \{\mathbf{K}_I\}\}}{a_2},\right. \\ 
&\left. \frac{\mu_0}{2a_2 + 2\chi^2/\lambda_{\min}\{\mathbf{K}_I\}} \right\}, \label{eq:delta_condition}
\end{split}
\end{equation}
then there exist positive constants $\underline{\alpha}$,
$\overline{\alpha}$, $c_1$, $c_2$, $c_3$, and $c_4$ such that
\begin{equation}
\underline{\alpha} \|\mathbf{z}\|^2 \leq W(\mathbf{e}_v, \bm{\eta}, \mathbf{q}) \leq \overline{\alpha} \|\mathbf{z}\|^2, \label{eq:W_quadratic_bounds}
\end{equation}
and
\begin{equation}
\dot{W} \leq -c_1\|\mathbf{e}_v\|^2 - c_2\|\bm{\eta}\|^2 + c_3\|\mathbf{q} - \mathbf{q}_d\|^2 + c_4, \label{eq:Wdot_main}
\end{equation}
along all closed-loop trajectories. One explicit choice is
\begin{equation}
\begin{aligned}
\underline{\alpha} &= \frac{1}{2} \min \left\{ a_1 - \delta a_2,\; \lambda_{\min} \{\mathbf{K}_I\} - \delta a_2 \right\}, \\ 
\overline{\alpha} &= \frac{1}{2} \max \left\{ a_2 + \delta a_2,\; \lambda_{\max} \{\mathbf{K}_I\} + \delta a_2 \right\},\\
c_1 &= \frac{\mu_0}{2} - \delta \left( a_2 + \frac{\chi^2}{\lambda_{\min} \{\mathbf{K}_I\}} \right), \\ 
c_2 &= \frac{\delta}{4} \lambda_{\min} \{\mathbf{K}_I\}, \\ 
c_3 &= \frac{A_q^2}{\mu_0} + \delta \frac{A_q^2}{\lambda_{\min} \{\mathbf{K}_I\}}, \\ 
c_4 &= \frac{A_c^2}{\mu_0} + \delta \frac{A_c^2}{\lambda_{\min}\{\mathbf{K}_I\}}.
\end{aligned}
\end{equation}
\end{proposition}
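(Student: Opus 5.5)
The plan has two parts: quadratic sandwich bounds for $W$, then a bound on $\dot W$ obtained by adding the cross-term derivative to \eqref{eq:Vdot_base}.

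\emph{Sandwich bounds.} Start from Corollary~\ref{corollary_energy_bounds}. Bound the cross term by Property~\ref{property_2} and Young's inequality:
\[
|\delta\,\mathbf{e}_v^{\text{T}}\mathbf{\tilde M}\bm\eta| \le \delta a_2\|\mathbf{e}_v\|\|\bm\eta\| \le \tfrac{\delta a_2}{2}\big(\|\mathbf{e}_v\|^2+\|\bm\eta\|^2\big).
\]
Combining this with $\tfrac12 a_1\|\mathbf{e}_v\|^2+\tfrac12\lambda_{\min}\{\mathbf{K}_I\}\|\bm\eta\|^2 \le V$ and the matching upper bound gives the stated $\underline\alpha$ and $\overline\alpha$. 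The first entry of the minimum in \eqref{eq:delta_condition} guarantees $\underline\alpha>0$.

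\emph{Derivative of $V$.} Use \eqref{eq:Vdot_base} together with $\|\mathbf r\|\le A_q\|\mathbf q-\mathbf q_d\|+A_v\|\mathbf e_v\|+A_c$ from \eqref{eq_R_full}--\eqref{eq_R_coeffs} and Property~\ref{prop_F_d_v}. This yields
\[
\dot V\le -\mu_0\|\mathbf e_v\|^2+\|\mathbf e_v\|\big(A_q\|\mathbf q-\mathbf q_d\|+A_c\big).
\]
Applying Young's inequality with weight $\mu_0$ to the last term gives
\[
\dot V\le -\tfrac{\mu_0}{2}\|\mathbf e_v\|^2+\tfrac{A_q^2}{\mu_0}\|\mathbf q-\mathbf q_d\|^2+\tfrac{A_c^2}{\mu_0},
\]
up to a factor 2 that I would absorb by splitting carefully. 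This produces the $1/\mu_0$ parts of $c_3$ and $c_4$.

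\emph{Derivative of the cross term.} Compute
\[
\tfrac{d}{dt}\big(\mathbf e_v^{\text{T}}\mathbf{\tilde M}\bm\eta\big)=(\mathbf{\tilde M}\dot{\mathbf e}_v)^{\text{T}}\bm\eta+\mathbf e_v^{\text{T}}\dot{\mathbf{\tilde M}}\bm\eta+\mathbf e_v^{\text{T}}\mathbf{\tilde M}\mathbf e_v .
\]
Substitute $\mathbf{\tilde M}\dot{\mathbf e}_v$ from the closed loop \eqref{closed_loop_eq1}:
\[
\mathbf{\tilde M}\dot{\mathbf e}_v=-\mathbf{\tilde C}\mathbf e_v-\mathbf r-\mathbf K_P\mathbf e_v-\mathbf K_I\bm\eta+(\mathbf{\tilde f}(\mathbf q,\mathbf v_d)-\mathbf{\tilde f}(\mathbf q,\mathbf v)).
\]
By Remark~\ref{remark_from_prop1_prop3}, $\dot{\mathbf{\tilde M}}=2\mathbf{\tilde C}$, so the $\mathbf{\tilde C}$ contributions combine into $+\mathbf e_v^{\text{T}}\mathbf{\tilde C}\bm\eta$. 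This is bounded by $c_c\|\mathbf e_v\|\|\bm\eta\|$ via Property~\ref{property_4}.

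The pieces of the cross-term derivative are then as follows.
\begin{itemize}
\item The term $-\bm\eta^{\text{T}}\mathbf K_I\bm\eta\le-\lambda_{\min}\{\mathbf K_I\}\|\bm\eta\|^2$ supplies the strict decay in $\bm\eta$.
\item The term $\mathbf e_v^{\text{T}}\mathbf{\tilde M}\mathbf e_v\le a_2\|\mathbf e_v\|^2$ is a harmless loss.
\item All terms linear in $\mathbf e_v$ and $\bm\eta$ (from $c_c$, $\mathbf K_P$, $d_v$, and the $A_v$ part of $\mathbf r$) are bounded by $\chi\|\mathbf e_v\|\|\bm\eta\|$.
\item The remaining residual pieces are bounded by $(A_q\|\mathbf q-\mathbf q_d\|+A_c)\|\bm\eta\|$.
\end{itemize}

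\emph{Absorbing the cross terms.} Take $\lambda=\lambda_{\min}\{\mathbf K_I\}$ and apply Young's inequality:
\[
\chi\|\mathbf e_v\|\|\bm\eta\|\le \tfrac{\lambda}{4}\|\bm\eta\|^2+\tfrac{\chi^2}{\lambda}\|\mathbf e_v\|^2 .
\]
The inputs $A_q\|\mathbf q-\mathbf q_d\|\|\bm\eta\|$ and $A_c\|\bm\eta\|$ are each bounded by $\tfrac{\lambda}{4}\|\bm\eta\|^2$ plus $A_q^2/\lambda$ (respectively $A_c^2/\lambda$) terms. Altogether the $\bm\eta$ decay remaining is $-\delta(\lambda-\tfrac{3\lambda}{4})\|\bm\eta\|^2=-\tfrac{\delta\lambda}{4}\|\bm\eta\|^2$, which is $c_2$.

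Summing, the coefficient of $\|\mathbf e_v\|^2$ becomes $-\tfrac{\mu_0}{2}+\delta(a_2+\chi^2/\lambda)$, which is $-c_1$. The second entry of \eqref{eq:delta_condition} is exactly the condition $c_1>0$.

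\emph{Main obstacle.} The hardest step is the bookkeeping of the Young-inequality weights so that the constants match the stated $c_3$ and $c_4$ exactly. For example, $A_q^2/\lambda$ requires the split $\tfrac{\lambda}{4}\|\bm\eta\|^2+\tfrac{A_q^2}{\lambda}\|\mathbf q-\mathbf q_d\|^2$, and I would verify each split against the claimed constants. If the numerical factors do not match, I would loosen the statement to ``one explicit choice up to absolute factors'', since the existence claim is unaffected. A second subtlety is using $c_c$ and $A_v$, which requires the trajectory to remain in the operating set where Property~\ref{property_4} and \eqref{eq_R_coeffs} hold. I would state this as a standing assumption along closed-loop trajectories.
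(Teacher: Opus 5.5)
Your proposal is correct and follows the paper's proof essentially step for step. It uses the same bound $\dot V\le-\mu_0\|\mathbf e_v\|^2+A_q\|\mathbf e_v\|\|\mathbf q-\mathbf q_d\|+A_c\|\mathbf e_v\|$, the same cross-term derivative simplified via $\dot{\tilde{\mathbf M}}=\tilde{\mathbf C}+\tilde{\mathbf C}^{\text{T}}$, the same Young splits with weight $\lambda_{\min}\{\mathbf K_I\}/4$ on the $\bm\eta$ side, and the same sandwich argument. Your hedge about a factor of 2 is unnecessary: splitting $A_q\|\mathbf e_v\|\|\mathbf q-\mathbf q_d\|$ and $A_c\|\mathbf e_v\|$ each as $\frac{\mu_0}{4}\|\mathbf e_v\|^2$ plus $A_q^2\|\mathbf q-\mathbf q_d\|^2/\mu_0$ (resp.\ $A_c^2/\mu_0$) reproduces the stated $c_1$, $c_3$, $c_4$ exactly, so no loosening of the statement is needed.
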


\begin{IEEEproof}
First, by \eqref{eq:Vdot_base}, \eqref{eq1_prop_F_d_v}, and \eqref{eq_R_full},
\begin{equation}
\begin{aligned}
\dot{V} &= - \mathbf{e}_v^{\text{T}} \mathbf{K}_P \mathbf{e}_v - \mathbf{e}_v^{\text{T}} \mathbf{r} + \mathbf{e}_v^{\text{T}} \left( \mathbf{\tilde{f}}(\mathbf{q}, \mathbf{v}_d) - \mathbf{\tilde{f}}(\mathbf{q}, \mathbf{v}) \right)\\ 
&\leq -\lambda_{\min} \{\mathbf{K}_P\} \|\mathbf{e}_v\|^2 + \|\mathbf{e}_v\| \|\mathbf{r}\| + d_v\|\mathbf{e}_v\|^2 \\
&\leq -\mu_0\|\mathbf{e}_v\|^2 + A_q\|\mathbf{e}_v\| \|\mathbf{q} - \mathbf{q}_d\| + A_c\|\mathbf{e}_v\|. \label{eq:Vdot_prestrict}
\end{aligned}
\end{equation}

Next, define the cross term
\begin{equation}
S(\mathbf{e}_v, \bm{\eta},q) = \mathbf{e}_v^{\text{T}} \mathbf{\tilde{M}}(\mathbf{q}) \bm{\eta}.
\end{equation}
Differentiating $S$ along the trajectories of \eqref{closed_loop_eq1} and using $\bm{\dot{\eta}}=\mathbf{e}_v$ gives
\begin{equation}
\begin{aligned}
\dot{S} &= \mathbf{\dot{e}}_v^{\text{T}} \mathbf{\tilde{M}}(\mathbf{q}) \bm{\eta} + \mathbf{e}_v^{\text{T}} \mathbf{\dot{\tilde{M}}}(\mathbf{q}) \bm{\eta} + \mathbf{e}_v^{\text{T}} \mathbf{\tilde{M}}(\mathbf{q}) \bm{\dot{\eta}}\\
&= \left(-\mathbf{\tilde{C}}(\mathbf{q}, \mathbf{\dot{q}}) \mathbf{e}_v - \mathbf{r} -\mathbf{K}_P \mathbf{e}_v - \mathbf{K}_I \bm{\eta}\right. \\
&+ \left. \mathbf{\tilde{f}}(\mathbf{q}, \mathbf{v}_d) - \mathbf{\tilde{f}}(\mathbf{q}, \mathbf{v}) \right)^{\text{T}} \bm{\eta} \\
&+ \mathbf{e}_v^{\text{T}} \mathbf{\dot{\tilde{M}}}(\mathbf{q}) \bm{\eta} + \mathbf{e}_v^{\text{T}} \mathbf{\tilde{M}}(\mathbf{q}) \mathbf{e}_v. \label{eq:Sdot_step1}
\end{aligned}
\end{equation}
Using Remark \ref{remark_from_prop1_prop3}, i.e. $\mathbf{\dot{\tilde{M}}}(\mathbf{q}) = \mathbf{\tilde{C}}(\mathbf{q}, \mathbf{\dot{q}}) + \mathbf{\tilde{C}}(\mathbf{q}, \mathbf{\dot{q}})^{\text{T}}$, \eqref{eq:Sdot_step1} becomes
\begin{equation}
\begin{aligned}
\dot{S} &= \mathbf{e}_v^{\text{T}} \mathbf{\tilde{M}}(\mathbf{q}) \mathbf{e}_v - \bm{\eta}^{\text{T}} \mathbf{K}_I\bm{\eta} - \mathbf{e}_v^{\text{T}} \mathbf{K}_P \bm{\eta} - \mathbf{r}^{\text{T}} \bm{\eta}\\ 
&+ \left(\mathbf{\tilde{f}}(\mathbf{q}, \mathbf{v}_d) - \mathbf{\tilde{f}}(\mathbf{q}, \mathbf{v})\right)^{\text{T}} \bm{\eta} + \mathbf{e}_v^{\text{T}} \mathbf{\tilde{C}}(\mathbf{q}, \mathbf{\dot{q}}) \bm{\eta}. \label{eq:Sdot_step2}
\end{aligned}
\end{equation}
By Property \ref{property_2}, by the uniform bound on $\mathbf{\tilde{C}}(\mathbf{q}, \mathbf{\dot{q}})$ in Property \ref{property_4}, and by \eqref{eq1_prop_F_d_v} and \eqref{eq_R_full},
\begin{equation}
\begin{aligned}
\dot{S} &\leq a_2\|\mathbf{e}_v\|^2 - \lambda_{\min} \{\mathbf{K}_I\} \|\bm{\eta}\|^2 + \lambda_{\max} \{\mathbf{K}_P\} \|\mathbf{e}_v\| \|\bm{\eta}\| \\
&+ \left(A_q\|\mathbf{q} - \mathbf{q}_d\| + A_v\|\mathbf{e}_v\| + A_c\right) \|\bm{\eta}\| \\
&+ d_v\|\mathbf{e}_v\| \|\bm{\eta}\| + c_c\|\mathbf{e}_v\| \|\bm{\eta}\| \\
&\leq a_2\|\mathbf{e}_v\|^2 - \lambda_{\min} \{\mathbf{K}_I\} \|\bm{\eta}\|^2 + \chi \|\mathbf{e}_v\| \|\bm{\eta}\| \\
&+ A_q\|\mathbf{q} - \mathbf{q}_d\| \|\bm{\eta}\| + A_c\|\bm{\eta}\|, \label{eq:Sdot_bound}
\end{aligned}
\end{equation}
where $\chi$ is given by \eqref{eq:chi_def}.

Now, from \eqref{eq:W_def},
\begin{equation}
\dot{W} = \dot{V} + \delta \dot{S}.
\end{equation}
Substituting \eqref{eq:Vdot_prestrict} and \eqref{eq:Sdot_bound} yields
\begin{equation}
\begin{aligned}
\dot{W} &\leq -\mu_0 \|\mathbf{e}_v\|^2 + A_q\|\mathbf{e}_v\| \|\mathbf{q} - \mathbf{q}_d\| + A_c\|\mathbf{e}_v\| \\
&+ \delta a_2\|\mathbf{e}_v\|^2 - \delta \lambda_{\min} \{\mathbf{K}_I\} \|\bm{\eta}\|^2 + \delta \chi \|\mathbf{e}_v\| \|\bm{\eta}\| \\
&+ \delta A_q\|\mathbf{q} - \mathbf{q}_d\| \|\bm{\eta}\| + \delta A_c\|\bm{\eta}\|. \label{eq:Wdot_before_young}
\end{aligned}
\end{equation}
Apply Young's inequality in the following form:
\begin{equation}
\begin{aligned}
A_q\|\mathbf{e}_v\| \|\mathbf{q} - \mathbf{q}_d\| &\leq \frac{\mu_0}{4}\|\mathbf{e}_v\|^2 + \frac{A_q^2}{\mu_0} \|\mathbf{q} - \mathbf{q}_d\|^2,\\
A_c\|\mathbf{e}_v\| &\leq \frac{\mu_0}{4}\|\mathbf{e}_v\|^2 + \frac{A_c^2}{\mu_0},\\
\delta \chi \|\mathbf{e}_v\| \|\bm{\eta}\| &\leq \delta \frac{\chi^2}{\lambda_{\min} \{\mathbf{K}_I\}} \|\mathbf{e}_v\|^2 \\
&+ \delta \frac{\lambda_{\min} \{\mathbf{K}_I\}}{4} \|\bm{\eta}\|^2,\\
\delta A_q\|\mathbf{q} - \mathbf{q}_d\| \|\bm{\eta}\| &\leq \delta \frac{A_q^2}{\lambda_{\min} \{\mathbf{K}_I\}} \|\mathbf{q} - \mathbf{q}_d\|^2 \\
&+ \delta \frac{\lambda_{\min} \{\mathbf{K}_I\}}{4} \|\bm{\eta}\|^2,\\
\delta A_c\|\bm{\eta}\| &\leq \delta \frac{A_c^2}{\lambda_{\min} \{\mathbf{K}_I\}} \\
&+ \delta \frac{\lambda_{\min} \{\mathbf{K}_I\}}{4} \|\bm{\eta}\|^2.
\end{aligned}
\end{equation}
Substituting these inequalities into \eqref{eq:Wdot_before_young} gives \eqref{eq:Wdot_main} with the stated constants.

It remains to show \eqref{eq:W_quadratic_bounds}. By \eqref{eq:W_def} and Corollary \ref{corollary_energy_bounds},
\begin{equation}
W = V + \delta \mathbf{e}_v^{\text{T}}\mathbf{\tilde{M}}(\mathbf{q})\bm{\eta}.
\end{equation}
Using Property \ref{property_2},
\begin{equation}
\begin{aligned}
|\mathbf{e}_v^{\text{T}}\mathbf{\tilde{M}}(\mathbf{q})\bm{\eta}| &\leq \|\mathbf{\tilde{M}}(\mathbf{q})\| \|\mathbf{e}_v\| \|\bm{\eta}\|\\ 
&\leq a_2\|\mathbf{e}_v\| \|\bm{\eta}\| \leq \frac{a_2}{2} \left(\|\mathbf{e}_v\|^2 + \|\bm{\eta}\|^2\right).
\end{aligned}
\end{equation}
Combining the preceding bound with \eqref{eq:V_quadratic_bounds} yields \eqref{eq:W_quadratic_bounds} with the given $\underline{\alpha}$ and $\overline{\alpha}$. Under \eqref{eq:delta_condition}, both $\underline{\alpha}$ and $c_1$ are strictly positive, while $c_2>0$ is immediate. This completes the proof.
\end{IEEEproof}

\begin{remark}
The adjective ``strict'' refers to the negative-definite part of the decay estimate in \eqref{eq:Wdot_main}, where positive constants $c_1$ and $c_2$ multiply $\|\mathbf{e}_v\|^2$ and $\|\bm{\eta}\|^2$, respectively. In the nominal case, i.e., when the residual and uncertainty-induced additive terms vanish, this gives a strict Lyapunov decay condition for the inner-loop error state. In the general case, the additional bounded terms lead to a practical stability result and to uniform ultimate boundedness of $\mathbf{z} = \begin{bmatrix} \mathbf{e}_v^{\text{T}} & \bm{\eta}^{\text{T}} \end{bmatrix}^{\text{T}}$.
\end{remark}

\subsubsection{Practical stability and uniform ultimate boundedness of the inner-loop error state}

The previous proposition establishes that the augmented function $W$ provides a practical Lyapunov estimate for the combined velocity-error and integral-error state, with strict decay terms and explicitly bounded additive terms associated with the kinematic mismatch $\|\mathbf{q} - \mathbf{q}_d\|$ and the constant residual level $A_c$. This estimate yields the following practical stability result.

\begin{proposition}[Uniform ultimate boundedness of the inner-loop error state]\label{prop_uub} 
Suppose that the conditions of Proposition \ref{prop_neg-semi} hold and that the reference signals generated by the virtual kinematic controller are bounded on the considered operating set. Furthermore, assume that there exists a constant $\bar{q} \geq 0$ such that
\begin{equation}
\|\mathbf{q}(t) - \mathbf{q}_d(t)\| \leq \bar{q}, \quad \forall t \geq 0. \label{eq:qerr_bounded}
\end{equation}
Let
\begin{equation}
c = \min\{c_1, c_2\}, \quad \bar{\rho} = c_3 \bar{q}^2 + c_4.
\end{equation}
Then, along all closed-loop trajectories,
\begin{equation}
\dot{W} \leq -c\|\mathbf{z}\|^2 + \bar{\rho}, \label{eq:Wdot_uub}
\end{equation}
and, equivalently,
\begin{equation}
\dot{W} \leq -\frac{c}{\overline{\alpha}} W + \bar{\rho}. \label{eq:Wdot_linear}
\end{equation}
Hence,
\begin{equation}
W(t) \leq e^{-\frac{c}{\overline{\alpha}} t} W(0) + \frac{\overline{\alpha}}{c} \bar{\rho} \left(1 - e^{-\frac{c}{\overline{\alpha}} t}\right), \quad \forall t \geq 0, \label{eq:W_solution_bound}
\end{equation}
and the inner-loop error state $\mathbf{z} = \begin{bmatrix} \mathbf{e}_v^{\text{T}} & \bm{\eta}^{\text{T}} \end{bmatrix}^{\text{T}}$ is uniformly ultimately bounded. In particular,
\begin{equation}
\limsup_{t \to \infty}\|\mathbf{z}(t)\| \leq \sqrt{\frac{\overline{\alpha}}{\underline{\alpha}} \frac{\bar{\rho}}{c}}. \label{eq:z_ultimate_bound}
\end{equation}
Moreover, in the nominal case for which $\mathbf{q}(t) \equiv \mathbf{q}_d(t)$ and $A_c = 0$, one has $\bar{\rho} = 0$ and therefore the equilibrium $\mathbf{z} = 0$ is exponentially stable on the operating set.
\end{proposition}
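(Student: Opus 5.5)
The plan is to start from the decay estimate \eqref{eq:Wdot_main} of Proposition \ref{prop_neg-semi}. Insert the standing bound \eqref{eq:qerr_bounded} on the configuration mismatch, which gives $c_3\|\mathbf{q}-\mathbf{q}_d\|^2 + c_4 \le c_3\bar q^2 + c_4 = \bar\rho$. Next use $-c_1\|\mathbf{e}_v\|^2 - c_2\|\bm{\eta}\|^2 \le -c(\|\mathbf{e}_v\|^2+\|\bm{\eta}\|^2) = -c\|\mathbf{z}\|^2$ with $c=\min\{c_1,c_2\}>0$; positivity holds by \eqref{eq:delta_condition}. This yields \eqref{eq:Wdot_uub} directly. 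For \eqref{eq:Wdot_linear}, use the upper quadratic bound in \eqref{eq:W_quadratic_bounds}: $\|\mathbf{z}\|^2 \ge W/\overline{\alpha}$, so $-c\|\mathbf{z}\|^2 \le -(c/\overline{\alpha})W$.

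Then apply the comparison lemma to the scalar linear inequality $\dot W \le -\kappa W + \bar\rho$ with $\kappa = c/\overline{\alpha}$. Multiplying by $e^{\kappa t}$ gives $\frac{d}{dt}(e^{\kappa t}W)\le \bar\rho e^{\kappa t}$, and integrating over $[0,t]$ gives \eqref{eq:W_solution_bound}. For the ultimate bound, take $\limsup$ to get $\limsup W \le \overline{\alpha}\bar\rho/c$, then use the lower bound $\underline{\alpha}\|\mathbf{z}\|^2\le W$ to obtain \eqref{eq:z_ultimate_bound}. Uniformity follows because every constant is time-invariant and configuration-independent, by Properties \ref{property_2}, \ref{property_4}, \ref{prop_LM_LC}, \ref{prop_F_d_v} and \eqref{eq_R_coeffs}. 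Concretely, for any $\epsilon>0$, the time needed to enter the ball of radius $\sqrt{\overline{\alpha}\bar\rho/(\underline{\alpha}c)}+\epsilon$ depends only on an upper bound for $\|\mathbf{z}(0)\|$, through $W(0)\le\overline{\alpha}\|\mathbf{z}(0)\|^2$, and not on the initial time.

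For the nominal case, $\mathbf{q}\equiv\mathbf{q}_d$ lets us take $\bar q=0$, and together with $A_c=0$ this gives $c_4=0$ and hence $\bar\rho=0$. Then $W(t)\le e^{-\kappa t}W(0)$, so $\|\mathbf{z}(t)\|\le\sqrt{\overline{\alpha}/\underline{\alpha}}\,e^{-\kappa t/2}\|\mathbf{z}(0)\|$, which is exponential stability of $\mathbf{z}=0$.

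The main obstacle is not the algebra but the validity of using the constants along the whole trajectory. The bounds $c_c$, $d_v$, $A_v$, $A_q$ and the others hold only on the compact operating set $\mathcal X\times\mathcal D$ with $|\dot\delta|\le\dot\delta_{\max}$. The plan is therefore to argue forward invariance, in the spirit of the statement's phrase ``on the operating set''. If the initial condition and the ultimate-bound ball lie in a sublevel set $\{W\le \max(W(0),\overline{\alpha}\bar\rho/c)\}$ whose image is contained in the operating set, then \eqref{eq:W_solution_bound} shows that $W$ never exceeds this level. So the trajectory stays where the estimates hold, and the argument closes; I would state this explicitly as a standing hypothesis rather than derive it.
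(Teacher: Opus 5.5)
Your proposal is correct and follows the paper's proof essentially step by step. Insert $\|\mathbf{q}-\mathbf{q}_d\|\le\bar q$ into \eqref{eq:Wdot_main}, use $c=\min\{c_1,c_2\}$ together with $W\le\overline{\alpha}\|\mathbf{z}\|^2$ to obtain the linear differential inequality, apply the comparison lemma, and take the limit superior via $\underline{\alpha}\|\mathbf{z}\|^2\le W$; the nominal case then follows from $\bar\rho=0$. Your explicit uniformity-in-initial-time argument and the forward-invariance hypothesis, which keeps trajectories where $c_c$, $d_v$, $A_v$, $A_q$ are valid, are refinements the paper leaves implicit in the phrase ``on the operating set'' rather than a different route.
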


\begin{IEEEproof}
By Proposition 1 and \eqref{eq:qerr_bounded},
\begin{equation}
\dot{W} \leq -c_1\|\mathbf{e}_v\|^2 - c_2\|\bm{\eta}\|^2 + c_3\bar{q}^2 + c_4 \leq -c\|\mathbf{z}\|^2 + \bar{\rho},
\end{equation}
which proves \eqref{eq:Wdot_uub}. Using the upper bound in \eqref{eq:W_quadratic_bounds},
\begin{equation}
W \leq \overline{\alpha} \|\mathbf{z}\|^2 \; \Longrightarrow \; \|\mathbf{z}\|^2 \geq \frac{1}{\overline{\alpha}} W.
\end{equation}
Substituting this into \eqref{eq:Wdot_uub} yields \eqref{eq:Wdot_linear}. The comparison lemma then gives \eqref{eq:W_solution_bound}. Finally, using the lower bound in \eqref{eq:W_quadratic_bounds},
\begin{equation}
\underline{\alpha} \|\mathbf{z}\|^2 \leq W,
\end{equation}
and taking $\limsup_{t\to\infty}$ in \eqref{eq:W_solution_bound}, one obtains \eqref{eq:z_ultimate_bound}.

If $\mathbf{q}(t) \equiv \mathbf{q}_d(t)$ and $A_c = 0$, then $\bar{\rho} = 0$, so \eqref{eq:Wdot_linear} reduces to
\begin{equation}
\dot W \leq -\frac{c}{\overline{\alpha}} W,
\end{equation}
which implies exponential decay of $W(t)$ to zero. By \eqref{eq:W_quadratic_bounds}, this is equivalent to exponential convergence of $\mathbf{z}(t)$ to the origin.
\end{IEEEproof}

\begin{remark}
The stability result in Proposition \ref{prop_uub} is an inner-loop result. It establishes practical stability and uniform ultimate boundedness of the combined velocity-error and integral-error state $\mathbf{z} = \begin{bmatrix} \mathbf{e}_v^{\text{T}} & \bm{\eta}^{\text{T}} \end{bmatrix}^{\text{T}}$ for bounded feasible reference signals and bounded kinematic mismatch $\|\mathbf{q}(t)-\mathbf{q}_d(t)\|$. Therefore, it should not be interpreted as a stand-alone proof of global trajectory-tracking stability of the complete outer-loop/inner-loop cascade. In the overall system, inner-loop tracking errors and saturation-induced kinematic mismatch act as bounded perturbations to the virtual kinematic controller, which is consistent with the practical stability objective considered in this work.
\end{remark}

\section{Experimental validation}\label{sec_Results}

\subsection{Experimental setup and controller implementation}
The mobile robot used for experimental validation was originally designed for vertical-surface locomotion and nondestructive testing. Its structure combines carbon-fiber tubes and 3D-printed acrylonitrile styrene acrylate (ASA) components, providing a high stiffness-to-mass ratio. The robot is actuated by eight Dynamixel XH430-W210 smart servo actuators, four assigned to wheel steering and four to wheel driving, enabling quasi-omnidirectional motion and stable orientation control during vertical operation. The actuators expose a torque (current) control mode that can be commanded at high update rates. The hybrid adhesion system comprises an electric ducted fan and a drone-type propulsion unit arranged near the robot's center of gravity; this configuration enhances adhesion and compensates for gravitational loading during vertical motion.

The low-level control system runs on an NVIDIA Jetson NX onboard computer executing ROS 2 Humble in a containerized (Docker) environment. The \texttt{ros2\_control} framework provides a hardware abstraction layer for coordinated velocity control of the drive and steering wheels. The onboard computer interfaces with the actuators through a U2D2 communication converter, which bridges the host and the motor bus. The actuators' half-duplex TTL serial protocol limits the control loop to an update rate of 500~Hz. This rate is sufficient to track the commanded wheel-velocity references while using motor torque as the control input.
Fig.~\ref{experimental_setup} illustrates the experimental configurations of the robot on the vertical and horizontal test surfaces.

The robot operates at 24~V, supplied through a power cable (labeled \textit{power cable} in Fig.~\ref{experimental_setup}), while the Ethernet cable is used only for logging data. During experiments on vertical surfaces, the robot is secured with a safety rope to prevent damage in the event of a failure.

\begin{figure}[!t]
\centering 
\includegraphics[width=\linewidth]{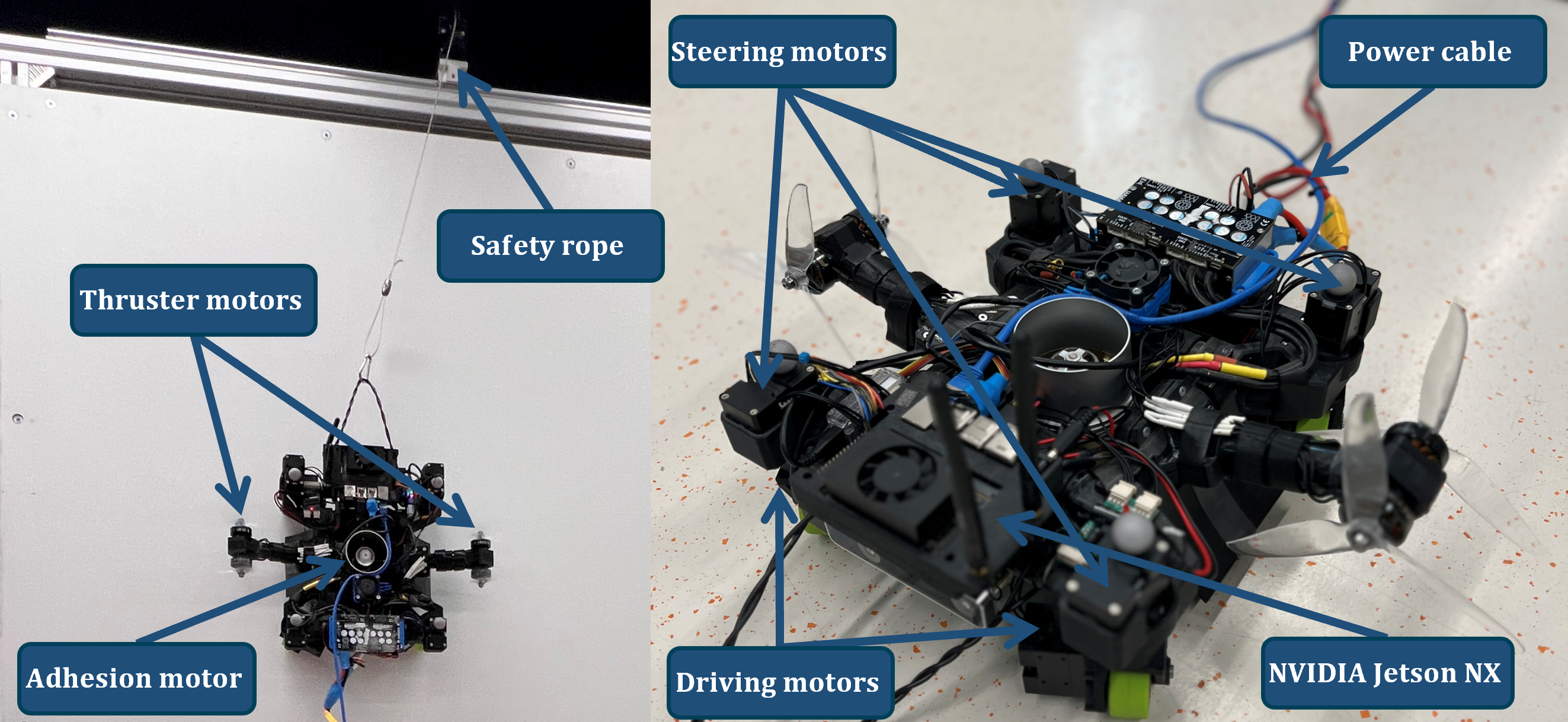} 
\caption{Experimental setup of the mobile robot on a vertical surface (left) and a horizontal surface (right).} \label{experimental_setup}
\end{figure}

\begin{table}[!t]
\caption{Robot physical parameters.} \label{tab:robot_parameters}
\centering
\setlength{\tabcolsep}{5pt}
\begin{tabular}{l c c}
\hline
\textbf{Parameter} & \textbf{Description} & \textbf{Value} \\
\hline
$r$ & Wheel radius & $0.0254~\mathrm{m}$ \\
$a$ & Half-length of robot body & $0.1125~\mathrm{m}$ \\
$b$ & Half-width of robot body & $0.1125~\mathrm{m}$ \\
$m$ & Robot mass & $3.50~\mathrm{kg}$ \\
$m_w$ & Wheel mass & $0.03203~\mathrm{kg}$ \\
$I_\theta$ & Robot yaw inertia & $0.03333~\mathrm{kgm^2}$ \\
$I_{\varphi}$ & Wheel rotational inertia & $1.03 \times 10^{-5}~\mathrm{kgm^2}$ \\
$I_\delta$ & Steering inertia & $0.002~\mathrm{kgm^2}$ \\
$A$ & $a/(2a^2+2b^2)$ & $2.2222~\mathrm{m^{-1}}$ \\
$I$ & $I_{\theta}+4m_w(a^2+b^2)$ & $0.0365~\mathrm{kgm^2}$ \\
\hline
\end{tabular}
\end{table}

Table \ref{tab:robot_parameters} summarizes the physical parameters of the robot. From these values, the inertia bounds from Property \ref{property_2} are $a_1 = 0.0040$, $a_2 = 4.2848$, in the reduced-coordinate inertia units. Furthermore, the gain-selection constants are $\sigma_J=4.6640$ (see Property \ref{property_6}), $L_{C_2}=1.7479\, \mathrm{kg}$ (see Property \ref{prop_LM_LC}), $|v_w| \leq v_{w, \max}=0.13\, \mathrm{m/s}$, and $|\dot{\delta}_{(\cdot)}| \leq \dot{\delta}_{\max}=\pi/2\, \mathrm{rad/s}$. Hence, 
\[
V_d \leq \sqrt{v_{w, \max}^2 + 2a^2 \dot{\delta}_{\max}^2} = 0.2817\, \mathrm{m/s}.
\]
Consistently with the scaled reduced-coordinate interpretation introduced in Remark \ref{rem_scaled_norm}, the reference-velocity bound is evaluated using a weighted Euclidean norm,
\[
\|\mathbf{v}_d\|_{W}^2 = v_{w,d}^2 + (a\dot{\delta}_{f,d})^2 + (a\dot{\delta}_{r,d})^2 \leq v_{w,\max}^2 + 2a^2\dot{\delta}_{\max}^2 .
\]
This norm represents the reduced velocity in a dimensionally consistent form, since the steering-rate components are scaled by the characteristic length $a$. Accordingly, the scalar constants used below in the gain conditions are interpreted in the same scaled reduced-coordinate representation, whereas the implemented diagonal gains are reported in their channel-wise physical units.

Furthermore, $A_v = 2.2964\, \mathrm{kg\,m/s}$ (see \eqref{eq_R_coeffs}). Note that $\dot{\delta}_{\max}$ is a steering-reference design limit, not the physical actuator saturation. Hence, the manufacturer-declared peak rate ($\approx 5.24\, \mathrm{rad/s}$) is not used in gain selection, since the analysis requires a bound consistent with the admissible reference signals.

The quadratic Coriolis-growth constant from Property \ref{property_4} is $b_c = 1.2355\,\mathrm{kg}$. Using the uniform Coriolis-matrix bound from Property \ref{property_4} gives $c_c = 3.8814\, \mathrm{kg/s}$.

Viscous friction in the actuation is modelled with coefficient $b_f=0.0305\, \mathrm{Nms/rad}$, which accounts for the motor-reducer losses and wheel-ground contact effects. Accordingly, $L_{f_2}=b_f=0.0305\, \mathrm{Nms/rad}$ (see Assumption \ref{assum_f}) and $d_v=0.6635\, \mathrm{Nms/rad}$ (see Property \ref{prop_F_d_v}). Therefore, according to \eqref{eq:mu0_def}, the sufficient strict decay condition in the Lyapunov analysis requires
\[
\mu_0 = \lambda_{\min}\{\mathbf{K}_P^\tau\} - A_v - d_v > 0,
\]
where $\mathbf{K}_P^\tau$ denotes the proportional gain matrix in the torque domain. Hence, the required lower bound is
\[
\lambda_{\min}\{\mathbf{K}_P^\tau\} > A_v + d_v = 2.9599.
\]

For implementation in current control mode, the torque domain gains are mapped using the motor torque constant $K_t = 1.923\, \mathrm{Nm/A}$. Thus, $\mathbf{K}_P^\tau = K_t \mathbf{K}_{P}^i$, $\mathbf{K}_I^\tau = K_t \mathbf{K}_{I}^i$, where $\mathbf{K}_{P}^i$ and $\mathbf{K}_{I}^i$ are the implemented current domain gains. These gain matrices were fine-tuned to improve the transient response while preserving the strict decay condition. The finally implemented current domain gain matrices are
\[
\mathbf{K}_{P}^i = \mathrm{diag} \left(1.563 \frac{\mathrm{As}}{\mathrm{m}^2}, 2.344 \frac{\mathrm{As}}{\mathrm{rad}}, 2.344 \frac{\mathrm{As}}{\mathrm{rad}}\right)
\]
and
\[
\mathbf{K}_{I}^i = \mathrm{diag} \left(0.061 \frac{\mathrm{A}}{\mathrm{m}^2}, 0.092 \frac{\mathrm{A}}{\mathrm{rad}}, 0.092 \frac{\mathrm{A}}{\mathrm{rad}}\right).
\]
The corresponding reduced space gain matrices scaled by the motor torque constant are
\[
\mathbf{K}_{P}^{\tau} = \mathrm{diag} \left(3.0056 \frac{\mathrm{Ns}}{\mathrm{m}}, 4.5075 \frac{\mathrm{Nms}}{\mathrm{rad}}, 4.5075 \frac{\mathrm{Nms}}{\mathrm{rad}}\right)
\]
and
\[
\mathbf{K}_{I}^{\tau} = \mathrm{diag} \left(0.1173 \frac{\mathrm{N}}{\mathrm{m}}, 0.1769 \frac{\mathrm{Nm}}{\mathrm{rad}}, 0.1769 \frac{\mathrm{Nm}}{\mathrm{rad}}\right).
\]

Consequently, in the adopted reduced-coordinate representation, the numerical diagonal gain coefficients give $\lambda_{\min}\{\mathbf{K}_{P}^i\}=1.563$, $\lambda_{\max}\{\mathbf{K}_{P}^i\}=2.344$, and $\lambda_{\min}\{\mathbf{K}_{P}^{\tau}\}=3.0056$, $\lambda_{\max}\{\mathbf{K}_{P}^{\tau}\}=4.5075$. The corresponding current domain gain condition is therefore
\[
\lambda_{\min}\{\mathbf{K}_{P}^i\} > \frac{A_v+d_v}{K_t} = 1.539,
\]
which is satisfied by the implemented gains. In particular,
\[
\mu_0 = K_t\lambda_{\min}\{\mathbf{K}_{P}^i\} - A_v - d_v = 0.0457 > 0.
\]

Furthermore, $\lambda_{\min}\{\mathbf{K}_{I}^{\tau}\}=0.1173$, $\lambda_{\max}\{\mathbf{K}_{I}^{\tau}\}=0.1769$, which confirms that $\mathbf{K}_{I}^{\tau}>0$, as required for the practical-stability analysis based on the augmented Lyapunov function. The constant entering the cross-term bound in Proposition \ref{prop_neg-semi} is
\[
\chi = c_c + \lambda_{\max}\{\mathbf{K}_P^\tau\} + d_v + A_v = 11.3488 \frac{\mathrm{kg}}{\mathrm{s}}.
\]

Using these values in \eqref{eq:delta_condition} gives the conservative admissible range $0 < \delta < \bar{\delta}$, $\bar{\delta} = 2.0752\times 10^{-5}\, \mathrm{s^{-1}}$. Note that, the auxiliary scalar $\delta$ is not an implementation parameter or a controller gain. It is used only in the Lyapunov function to prove practical stability.

The outer-loop gains were selected as
\[
k_x = k_y = k_\theta = k_\delta=5\, \mathrm{s^{-1}}.
\]

\subsection{Experimental results and discussion}

Three experimental scenarios are considered:
\begin{itemize}
\item[\textit{i)}] Lissajous-trajectory tracking on a horizontal surface with disturbance injection using the proposed Lyapunov-based PI-type control law with feedforward compensation, compared with a PI controller and a sliding-mode controller;
\item[\textit{ii)}] planar flower-trajectory tracking on a horizontal surface with disturbance injection using the proposed Lyapunov-based PI-type control law with feedforward compensation; and
\item[\textit{iii)}] Lissajous-trajectory tracking on a vertical surface under gravity and adhesion-driven contact effects using the proposed Lyapunov-based PI-type control law with feedforward compensation.
\end{itemize}

For a experimental comparison \textit{i)}, all controllers were evaluated using the same reference generator and the same measured feedback signals. The baseline PI and sliding-mode controllers were implemented in the same reduced velocity space as the proposed controller. Their gains were tuned experimentally to obtain the best tracking performance. The implemented control laws and gain values are summarized below.
The proposed controller achieves lower tracking error in the x-y plane, together with smoother velocity responses and reduced actuation effort, as summarized in Table~\ref{tab:results} where \textit{No. Dist} presents the results without distrubance, \textit{Norm. Dist} presents the results with a normal disturbance, and \textit{Norm.-Ax. Dist} presents the results with a normal and axial disturbance. The tracking performance is evaluated in terms of root mean square error (RMSE) of the position and orientation tracking errors, as well as the total variation (TV) of the velocity commands and the control inputs, which are indicative of the smoothness of the control actions.

\begin{table*}[!t]
\caption{Tracking performance across the three experiments.}
\label{tab:results}
\centering
\fontsize{8}{9}\selectfont

\begin{minipage}[t]{0.49\textwidth}
\vspace{0pt}
\centering
\begin{tabular}{|c|c|c|c|c|}
\hline
\textbf{Controller} & \textbf{RMSE}
& \textbf{No Dist.} & \textbf{Norm. Dist.} & \textbf{Norm.-Ax. Dist.}\\
\hline\hline
\multicolumn{5}{|l|}{\textit{Experiment i) -- Lissajous trajectory on horizontal surface}}\\
\hline
\multirow[c]{6}{*}{Lyap. PI\,+\,FF}
& $e_x$           & \textbf{0.008} & \textbf{0.009} & \textbf{0.010}\\
\cline{2-5}
& $e_y$           & \textbf{0.010} & \textbf{0.011} & \textbf{0.012}\\
\cline{2-5}
& $e_\theta$      & \textbf{0.082} & 0.085 & \textbf{0.086}\\
\cline{2-5}
& $\mathrm{TV}_v$ & \textbf{7.76}  & \textbf{8.45}  & \textbf{15.54}\\
\cline{2-5}
& $\mathrm{TV}_f$ & \textbf{16.64} & \textbf{15.09} & \textbf{16.14}\\
\cline{2-5}
& $\mathrm{TV}_r$ & \textbf{17.65} & \textbf{12.59} & \textbf{18.12}\\
\hline
\multirow[c]{6}{*}{PI}
& $e_x$           & 0.010  & 0.010  & 0.018\\
\cline{2-5}
& $e_y$           & 0.011  & 0.011  & 0.017\\
\cline{2-5}
& $e_\theta$      & 0.083  & \textbf{0.084}  & 0.097\\
\cline{2-5}
& $\mathrm{TV}_v$ & 73.853 & 65.364 & 47.321\\
\cline{2-5}
& $\mathrm{TV}_f$ & 26.539 & 17.823 & 22.148\\
\cline{2-5}
& $\mathrm{TV}_r$ & 27.688 & 13.214 & 31.011\\
\hline
\multirow[c]{6}{*}{SMC}
& $e_x$           & 0.020  & 0.023  & 0.028\\
\cline{2-5}
& $e_y$           & 0.020  & 0.022  & 0.028\\
\cline{2-5}
& $e_\theta$      & 0.104  & 0.104  & 0.140\\
\cline{2-5}
& $\mathrm{TV}_v$ & 27.224 & 22.249 & 30.961\\
\cline{2-5}
& $\mathrm{TV}_f$ & 47.280 & 56.291 & 57.266\\
\cline{2-5}
& $\mathrm{TV}_r$ & 59.335 & 56.014 & 64.428\\
\hline
\end{tabular}%
\end{minipage}
\hfill
\begin{minipage}[t]{0.49\textwidth}
\vspace{0pt}
\centering
\begin{tabular}{|c|c|c|c|c|}
\hline
\textbf{Controller} & \textbf{RMSE}
& \textbf{No Dist.} & \textbf{Norm. Dist.} & \textbf{Norm.-Ax. Dist.}\\
\hline\hline
\multicolumn{5}{|l|}{\textit{Experiment ii) -- Flower trajectory on horizontal surface}}\\
\hline
\multirow[c]{6}{*}{Lyap. PI\,+\,FF}
& $e_x$           & 0.077  & 0.080   & 0.077\\
\cline{2-5}
& $e_y$           & 0.069  & 0.070   & 0.068\\
\cline{2-5}
& $e_\theta$      & 0.054  & 0.083   & 0.059\\
\cline{2-5}
& $\mathrm{TV}_v$ & 16.322 & 16.936  & 39.227\\
\cline{2-5}
& $\mathrm{TV}_f$ & 81.343 & 166.548 & 309.920\\
\cline{2-5}
& $\mathrm{TV}_r$ & 42.979 & 34.573  & 52.122\\
\hline
\end{tabular}%

\vspace{2mm}

\begin{tabular}{|c|c|c|c|c|}
\hline
\textbf{Controller} & \textbf{RMSE}
& \textbf{No Dist.} & \textbf{Norm. Dist.} & \textbf{Norm.-Ax. Dist.}\\
\hline\hline
\multicolumn{5}{|l|}{\textit{Experiment iii) -- Lissajous trajectory on vertical surface}}\\
\hline
\multirow[c]{6}{*}{Lyap. PI\,+\,FF}
& $e_x$           & -- & -- & 0.044\\
\cline{2-5}
& $e_y$           & -- & -- & 0.130\\
\cline{2-5}
& $e_\theta$      & -- & -- & 0.000\\
\cline{2-5}
& $\mathrm{TV}_v$ & -- & -- & 16.095\\
\cline{2-5}
& $\mathrm{TV}_f$ & -- & -- & 47.999\\
\cline{2-5}
& $\mathrm{TV}_r$ & -- & -- & 24.550\\
\hline
\end{tabular}%
\end{minipage}

\end{table*}

\begin{figure*}[!t]
\centering
\includegraphics[scale=0.43]{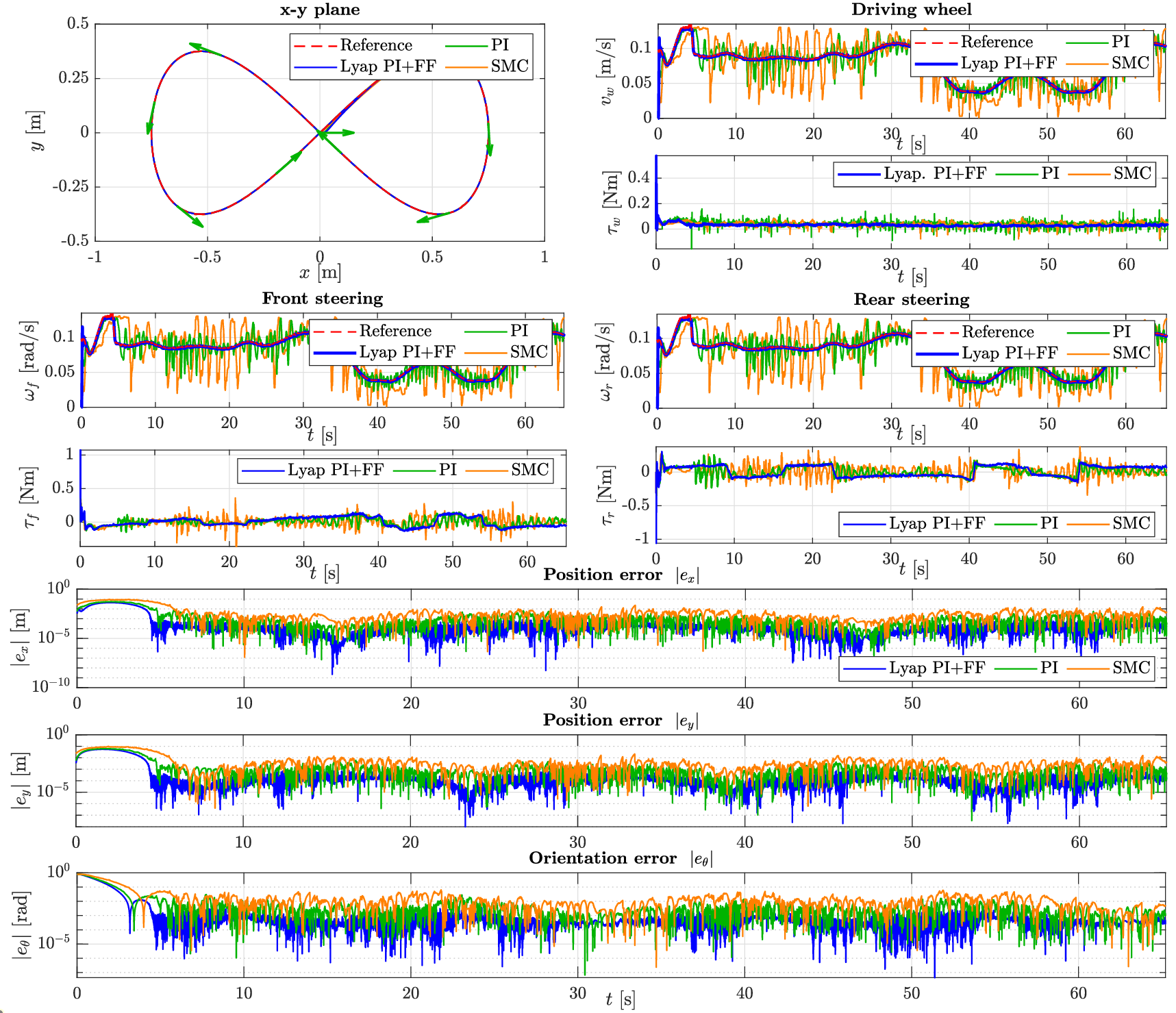}
\caption{Experimental responses of the robot without disturbance, compared across the three different controllers. Green arrows indicate robot orientation.}\label{fig_comp_no_dist}
\end{figure*}

\begin{figure*}[!t]
\centering
\includegraphics[scale=0.43]{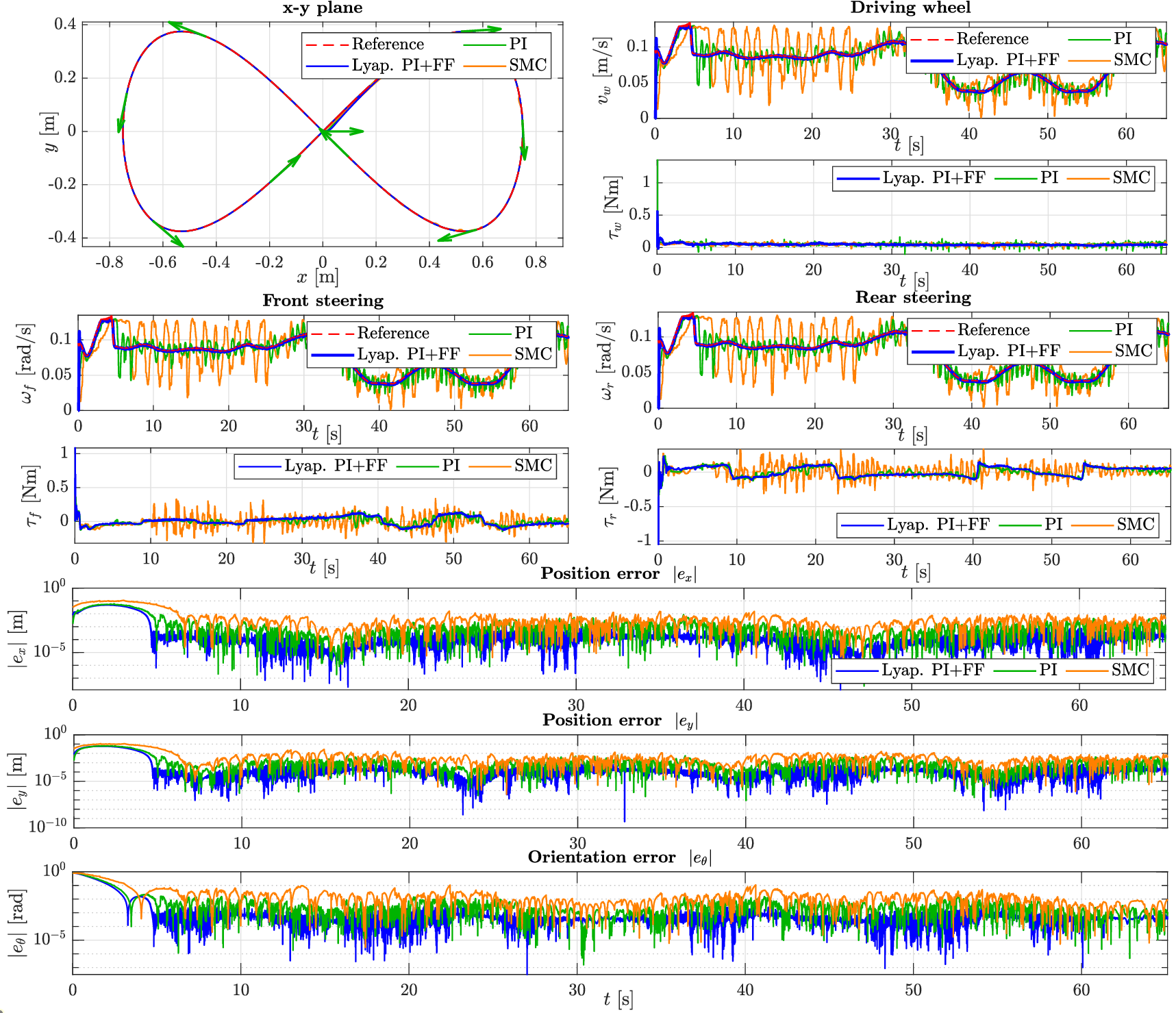}
\caption{Experimental responses of the robot subjected to a normal disturbance, compared across the three different controllers. Green arrows indicate robot orientation.}\label{fig_comp_normal_dist}
\end{figure*}

\begin{figure*}[!t]
\centering
\includegraphics[scale=0.43]{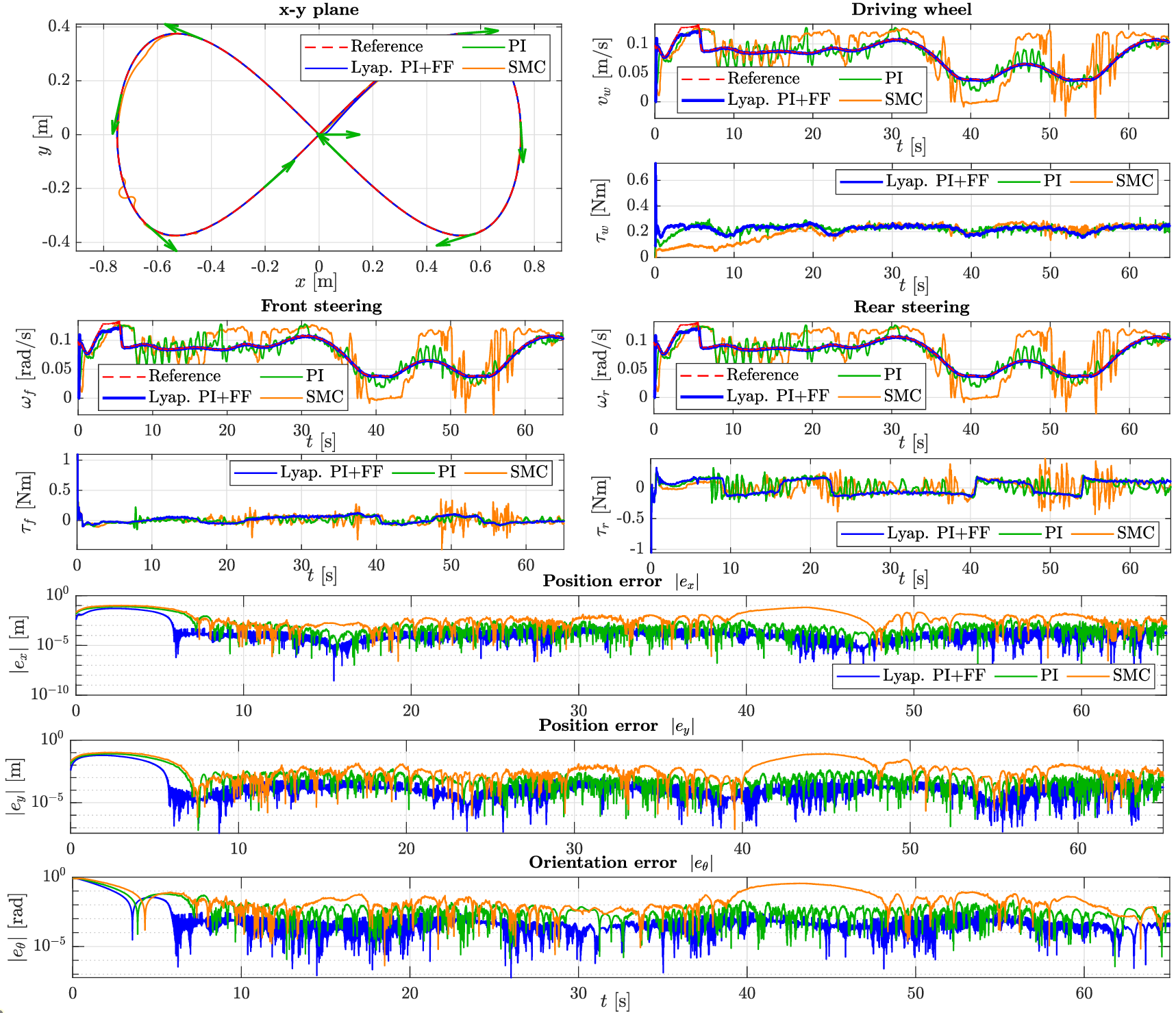}
\caption{Experimental responses of the robot subjected to a normal and axial disturbance, compared across the three different controllers. Green arrows indicate robot orientation.}\label{fig_comp_normal_axial_dist}
\end{figure*}

The baseline PI controller was implemented in the standard form $\bm{\tau}_{PI} = \mathbf{\tilde{B}}^{-1} \left( -\mathbf{K}_{P,PI}\mathbf{e}_v - \mathbf{K}_{I,PI} \bm{\eta} \right)$, $\bm{\dot{\eta}}=\mathbf{e}_v$. Here, $\mathbf{\tilde{B}}$, $\mathbf{e}_v$, and $\bm{\eta}$ are defined as in the proposed controller formulation. The implemented proportional and integral gains were chosen as diagonal matrices, $\mathbf{K}_{P,PI}=0.03\,\mathbf{I}_3$, $\mathbf{K}_{I,PI}=0.10\,\mathbf{I}_3$.

The sliding-mode controller (SMC) was implemented as a baseline for the drive-wheel speed subsystem. The same symmetric reduced reference generator was used as for the proposed controller, so that the common reduced drive velocity $v_{w,d}$ was mapped to the corresponding four-wheel drive-speed reference vector $\mathbf{v}_{rc}$. Let $\mathbf{e}=\mathbf{v}_{rc}-\mathbf{v}_r$ denote the drive-wheel speed tracking error, where $\mathbf{v}_r$ is the vector of measured wheel ground speeds used by the baseline drive-wheel subsystem. The SMC torque command is given by $\bm{\tau}_{SMC} = \mathbf{B}_r^{-1} \left[ \mathbf{\dot{v}}_{rc} + \bm{\Lambda}\mathbf{e} + \mathbf{A}_r\mathbf{v}_r + \mathbf{K}\operatorname{sat} \left( \mathbf{S}/\epsilon \right) \right]$, where $\mathbf{A}_r$ and $\mathbf{B}_r$ are obtained from the inertia, Coriolis, and input-map decomposition of the drive-wheel dynamic model. The matrix $\mathbf{A}_r$ represents the model-based Coriolis-like feedback term, $\mathbf{B}_r$ is the corresponding input matrix, and $\mathbf{K}=\operatorname{diag}(k_i)$ is the switching-gain matrix. The sliding surface is of PI-type and is defined as $\mathbf{S} = \mathbf{e} + \bm{\Lambda} \int_0^t \mathbf{e}(\xi) d\xi$. The function $\operatorname{sat}(\cdot)$ is applied component-wise within a boundary layer of half-width $\epsilon$ in order to reduce chattering.

The implemented SMC parameters were $\lambda_i=2.0~\mathrm{s}^{-1}$, $k_i=50.0~\mathrm{m/s^2}$, $\epsilon=1.0~\mathrm{m/s}$, $\tau_{\max}=1.0~\mathrm{Nm}$, $\rho=10^{-6}~\mathrm{m}^{-1}$.
Here, $\lambda_i$ is the integral gain in the PI-type sliding surface, $k_i$ is the switching gain, $\epsilon$ is the boundary-layer half-width, $\mu_i$ is the derivative gain, $\tau_{\max}$ is the drive-torque saturation limit, $\rho$ is the Tikhonov regularization parameter used in the input map, and $f_s$ is the control update frequency.

In the planar case \textit{ii)}, robustness is assessed by applying external disturbances via thrusters acting against the driving direction, while the adhesion system increases the effective normal load. In the vertical case \textit{iii)}, adhesion ensures wall contact and thrusters provide lifting force for gravity compensation, leading to significant disturbances and complex contact effects. The same controller gains are used in both experiments, highlighting that the selected gains satisfy the analytical stability condition while maintaining robust performance across substantially different disturbance and contact regimes.

Figs.~\ref{fig_comp_no_dist}, \ref{fig_comp_normal_dist}, and \ref{fig_comp_normal_axial_dist} present the results of the first experiment, in which the robot tracks a Lissajous trajectory defined by $x_d(t) = 0.75 \cos(0.1t - \pi/2 + 0.75)$, $y_d(t) = -0.5 \sin(0.2t - \pi)$, and $\theta_d(t) = \operatorname{atan2}(\dot{y}_d, \dot{x}_d)$ on a horizontal surface, using three controllers: the proposed controller, a PI controller, and a sliding-mode controller. The proposed controller achieves lower tracking error in the \textit{x-y} plane, together with smoother velocity responses and reduced actuation effort. This tracking performance is consistent with the nominal stability guarantees established by the proposed analysis under all three cases - no disturbance, normal disturbance and normal-axial disturbance. The results also show that the proposed controller is more robust to disturbances than the PI and sliding-mode controllers, which exhibit significant tracking errors and increased actuation effort in the presence of disturbances (see Table~\ref{tab:results}). The observed performance under disturbances is consistent with the practical stability and bounded error behaviour predicted by the proposed analysis. It should be emphasized that the reference curves shown in the plots correspond exclusively to the proposed controller case in order to prevent visual ambiguity.

Fig.~\ref{fig_flower_floor} presents results for a flower-shaped trajectory defined by $x_d(t) = 0.5 \cos(2\pi t/35) \cos(2\pi t/70) + 0.1$, $y_d(t) = 0.5 \cos(2\pi t/35) \sin(2\pi t/70) + 0.2$, and $\theta_d(t) = \operatorname{atan2}(\dot{y}_d, \dot{x}_d)$. This trajectory is more demanding than the Lissajous case, featuring a larger number of turns and more frequent changes in curvature, and is therefore used to evaluate the proposed controller under more challenging tracking conditions. To assess robustness, the figure overlays the response of the proposed controller under the different disturbance conditions in a single plot. The results show that the motor torques increase in the presence of the applied disturbances, as reflected in the total-variation metrics reported in Table~\ref{tab:results}; despite this transient increase in actuation effort, the velocity tracking performance remains stable despite this transient increase in actuation effort, the velocity tracking performance remains stable, which in turn ensures stable tracking of the kinematic trajectory in the \textit{x-y} plane. The observed behaviour is consistent with the practical stability and bounded-error guarantees established by the proposed analysis. It should be emphasized that the reference curves shown in the plots correspond exclusively to the nominal, disturbance-free trajectory in order to prevent visual ambiguity. 

The results of the third experiment, in which the robot moves on a vertical surface, are shown in Fig. \ref{fig_eight_wall}. The trajectory used for this experiment is defined as $x_d(t) = 0.75 \cos(0.1t - \pi/2 + 0.75)$, $y_d(t) = -0.5 \sin(0.2t - \pi)$, and $\theta_d(t) = 0.0$. It should be noted that, in order for the thruster motors to effectively contribute to assisted motion along the vertical surface, it is necessary to maintain a constant orientation of the robot, $\theta_d = 0.0$ rad. In contrast to the planar case, the motor torques remain relatively low, since the adhesion system ensures stable contact with the surface, while the thrusters assist the driven wheels in counteracting gravitational effects during upward motion. This is also reflected in the comparatively low total-variation values, denoted by $\mathrm{TV}_{(\cdot)}$, reported in Table~\ref{tab:results}. Furthermore, it can be observed that during the second part of the trajectory, when the robot descends along the vertical surface, the motor torques are close to zero, as the actuators primarily operate in a braking regime to maintain accurate trajectory tracking. In both motion phases, the tracking of the reference velocity trajectory, as well as the tracking of the kinematic trajectory in the \textit{z-y} plane, remains stable and accurate. Overall, the experiments indicate attenuation of gravity-related and contact-dependent effects without loss of stability, in agreement with the practical robustness and uniform ultimate boundedness interpretation of the proposed analysis.

\begin{figure*}[!t]
\centering
\includegraphics[scale=0.43]{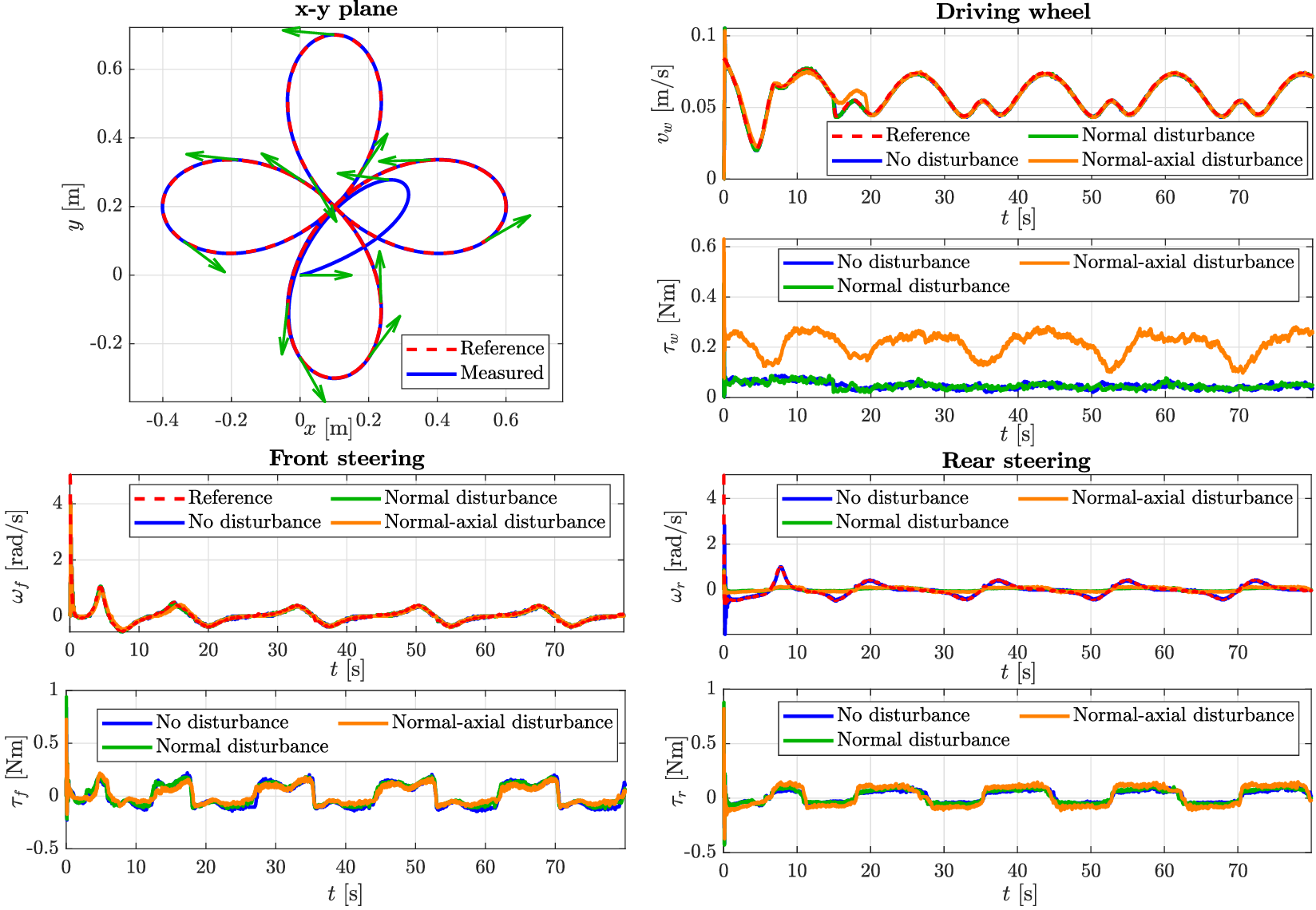}
\caption{Trajectory tracking with proposed Lyap. PI + FF on horizontal plane. Green arrows indicate robot orientation.}\label{fig_flower_floor}
\end{figure*}
\begin{figure*}[!t]
\centering
\includegraphics[scale=0.43]{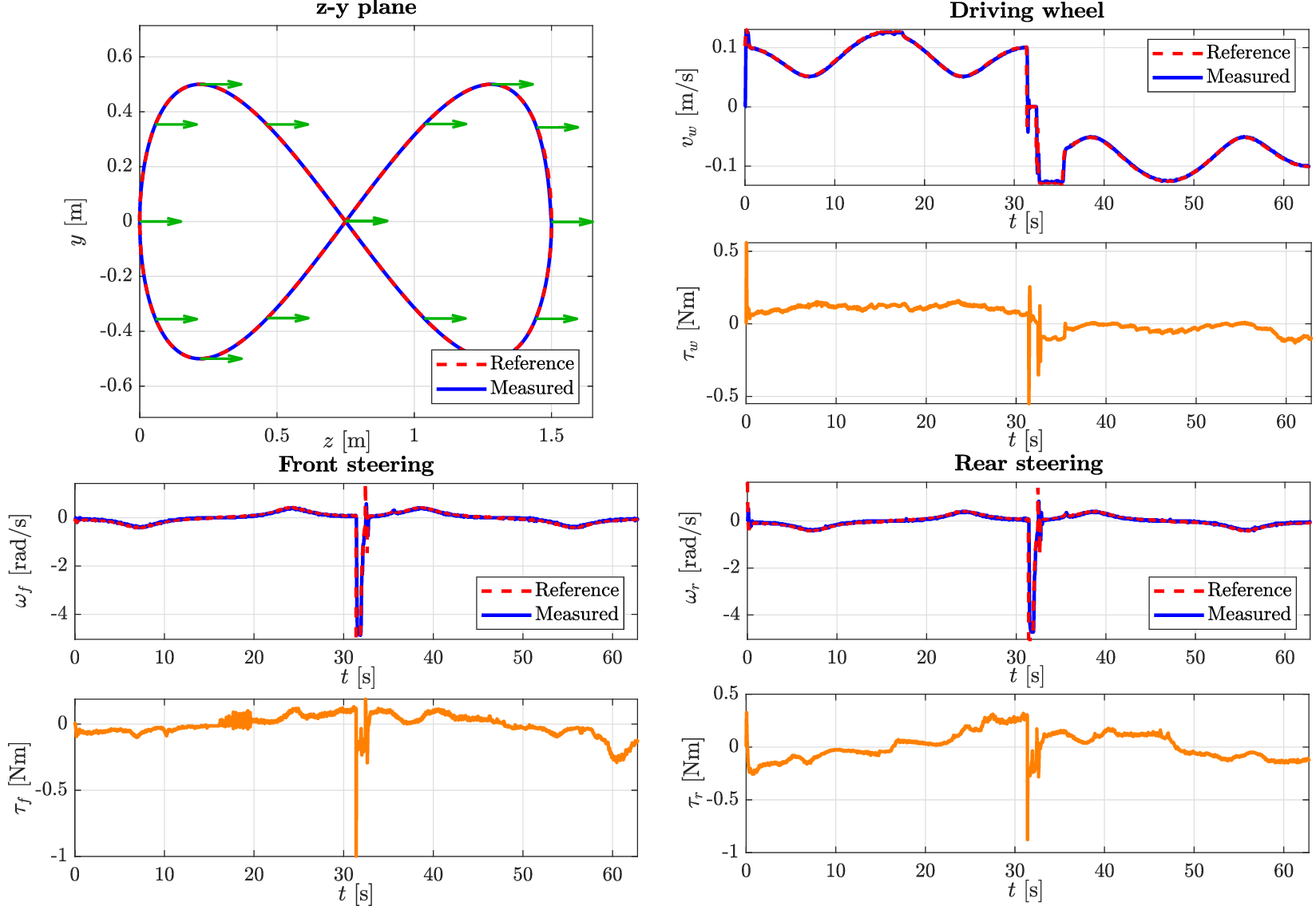}
\caption{Trajectory tracking with proposed Lyap. PI + FF on vertical surface. Green arrows indicate robot orientation.}\label{fig_eight_wall}
\end{figure*}

Short peaks in $\omega_{(\cdot)}$ are observed and are attributed to reference-generation transients of the virtual kinematic controller, i.e., abrupt changes in the mapped steering reference $\delta_{(\cdot)}$ at initialization and during mode transitions due to steering-angle unwrapping and sign changes in the commanded longitudinal velocity (e.g., at the top of the Lissajous trajectory). Hence, these peaks are regarded as brief reference-mapping events rather than sustained steering-rate demands, and $\dot{\delta}_{\max}=\pi/2\,\mathrm{rad/s}$ is adopted as a conservative bound for steady tracking.

A video demonstrating the conducted experiments is provided in the supplementary material.

\section{Conclusion}\label{sec_Conclusion}

This paper addressed the problem of robust trajectory tracking for an independently steered and driven four-wheel mobile robot through the development of a PI-like velocity control framework supported by a Lyapunov-based practical stability analysis of the closed-loop error dynamics. The proposed design combines nominal model compensation with proportional-integral feedback in the velocity space, which allows the controller to retain a simple implementation-oriented structure while explicitly accounting for residual dynamics and bounded unmodelled effects. A key outcome of the analysis is the construction of an augmented Lyapunov function for the combined velocity-error and integral-error state. This construction leads to explicit gain conditions under which the inner-loop velocity-error dynamics are practically stable and the combined inner-loop velocity-error and integral-error state is uniformly ultimately bounded, thereby providing robustness guarantees for the velocity-tracking loop without increasing the complexity of the resulting control law.

The approach was experimentally validated on a robot under diverse operating conditions, including horizontal motion with disturbances and vertical motion with gravity, adhesion, and complex contact effects, and was benchmarked against PI and sliding-mode controllers. The results demonstrate stable closed-loop behaviour and accurate trajectory tracking, with the proposed controller achieving lower tracking error and reduced actuation effort than both baselines, confirming its practical effectiveness. On the Lissajous benchmark, the proposed controller reduced translational tracking error by approximately 19\% relative to the PI baseline and 57\% relative to SMC, while lowering the total variation of the commanded inputs by roughly 45\% and 68\%, respectively. These reductions were most pronounced under the demanding normal–axial disturbance and indicate that the feedforward term improves accuracy without incurring additional actuation effort.

\bibliographystyle{IEEEtran}
\bibliography{references}

\vfill

\end{document}

%% file: Figures/control_scheme_kin_dyn_ver3.tex
\begin{tikzpicture}[x=1pt,y=1pt, line cap=round, line join=round, line width=0.4pt, >={Triangle[length=3.5mm,width=2.6mm,fill=black]}]

\node[rectangle, draw, line width=2.0pt, line join=miter, fill=white, minimum width=90.0pt, minimum height=34.0pt, inner sep=0pt, align=center] (n1) at (20.00,104.00) {Reference pose\\ $\left[x_d(t)\; y_d(t)\; \theta_d(t)\right]^{\text{T}}$};

\node[ellipse, draw, fill=white, minimum width=20.0pt, minimum height=20.0pt, inner sep=0pt, align=center] (n2) at (100.00,104.00) {};

\node[rectangle, draw, line width=2.0pt, line join=miter, fill=white, minimum width=90.0pt, minimum height=44.0pt, inner sep=0pt, align=center] (n3) at (215.50,104.00) {\shortstack{Virtual kinematic\\ controller}};

\node[rectangle, draw, line width=2.0pt, line join=miter, fill=white, minimum width=120.0pt, minimum height=35.0pt, inner sep=0pt, align=center] (n4) at (435.00,104.00) {$\mathbf{\tilde{B}}^{-1}(\mathbf{q}) \Big( -{\mathbf{K}}_P \mathbf{e}_v - \mathbf{K}_I \bm{\eta} \Big)$\\ $\bm{\dot{\eta}} = \mathbf{e}_v$};

\node[ellipse, draw, fill=white, minimum width=20.0pt, minimum height=20.0pt, inner sep=0pt, align=center] (n5) at (530.00,104.00) {};

\node[rectangle, draw, line width=2.0pt, line join=miter, fill=white, minimum width=63.0pt, minimum height=50.0pt, inner sep=0pt, align=center] (n6) at (600.00,104.00) {\shortstack{Robot\\ dynamics}};

\node[rectangle, draw, line width=2.0pt, line join=miter, fill=white, minimum width=160.0pt, minimum height=25.0pt, inner sep=0pt, align=center] (n7) at (430.00,145.00) {$\mathbf{\tilde{B}}^{-1}(\mathbf{q}) \Big( \mathbf{\tilde{M}}(\mathbf{q}_d)\mathbf{\dot{v}}_d + \mathbf{\tilde{C}}(\mathbf{q}_d, \mathbf{\dot{q}}_d)\mathbf{v}_d \Big)$};

\node[rectangle, draw, line width=2.0pt, line join=miter, fill=white, minimum width=30.0pt, minimum height=24.0pt, inner sep=0pt, align=center] (n8) at (280.00,52.00) {$\mathbf{J}(\mathbf{q})$};

\node[rectangle, draw, line width=2.0pt, line join=miter, fill=white, minimum width=20.0pt, minimum height=24.0pt, inner sep=0pt, align=center] (n9) at (220.00,52.00) {$\int$};

\node[rectangle, draw, line width=2.0pt, line join=miter, fill=white, minimum width=55.0pt, minimum height=24.0pt, inner sep=1pt, align=center] (n11) at (140.00,52.00) {$\left[\mathbf{I}_{3 \times 3}\; \mathbf{0}_{3 \times 3}\right]$};

\node[ellipse, draw, fill=white, minimum width=20.0pt, minimum height=20.0pt, inner sep=0pt, align=center] (n10) at (340.00,104.00) {};
\node[draw=none, inner sep=0pt, align=center] at (138.00,114.00) {$\left[e_x\; e_y\; e_{\theta}\right]^{\text{T}}$};
\node[draw=none, inner sep=0pt, align=center] at (303.50,114.00) {$\mathbf{v}_d$};
\node[draw=none, inner sep=0pt, align=center] at (550.00,111.67) {$\bm{\tau}$};
\node[draw=none, inner sep=0pt, align=center] at (90.00,85.00) {$-$};
\node[draw=none, inner sep=0pt, align=center] at (80.00,114.00) {$+$};
\node[draw=none, inner sep=0pt, align=center] at (520.00,128.00) {$+$};
\node[draw=none, inner sep=0pt, align=center] at (510.00,116.00) {$+$};
\node[draw=none, inner sep=0pt, align=center] at (459.00,49.00) {$\mathbf{v}$};
\node[draw=none, inner sep=0pt, align=center] at (315.50,110.00) {$-$};
\node[draw=none, inner sep=0pt, align=center] at (330.50,85.00) {$+$};
\node[draw=none, inner sep=0pt, align=center] at (361.50,113.50) {$\mathbf{e}_v$};
\node[draw=none, inner sep=0pt, align=center] at (200.00,59.00) {$\mathbf{q}$};
\node[draw=none, inner sep=0pt, align=center] at (610.00,159.00) {$\tilde{\mathbf{f}}$};

\draw[->] (n1.east) -- (n2.west);

\draw[->] (n2.east) -- (n3.west);

\draw[->] (n3.east) -- (287.00,104.00) |- ([yshift=-10pt]n7.west);
\fill[black] (287.00,104.00) circle (2pt);

\draw[->] (n5.east) -- (n6.west);

\draw[->] (n4.east) -- (n5.west);

\draw[->] (n6.east) -- (650.00,104.00) -- (650.00,45.00) -- ([yshift=-7pt]n8.east);

\draw[->] (n7.east) -| (n5.north);

\draw[->] (n8.west) -- (n9.east);

\draw[->] (n9.west) -- (n11.east) coordinate[pos=0.5] (cMidN9N11);
\fill (cMidN9N11) circle (2pt);

\coordinate (n4Sdown) at ([yshift=-15pt]n4.south);
\draw[->] (cMidN9N11)
-- (cMidN9N11 |- n4Sdown)
-- (n4.south |- n4Sdown)
-- (n4.south);

\coordinate (cToN7) at ([xshift=10pt]n3.east |- n4Sdown);

\fill (cToN7) circle (2pt);

\draw[->] (cToN7) |- (n7.west);

\coordinate (cToN8) at ([xshift=65pt]n3.east |- n4Sdown);

\fill (cToN8) circle (2pt);

\draw[->] (cToN8) |- ([yshift=7pt]n8.east);

\draw[->] (n11.west) -| (n2.south);

\draw[->] (n10.east) -- (n4.west);

\draw[<-] (329.00,104.00) -- (273.00,104.00);

\draw[<-] (n10.south) -- (340.00,45.00);
\fill[black] (340.00,45.00) circle (2pt);

\draw[->] ([yshift=40pt]n6.north) -- (n6.north);

\coordinate (n7Wup) at ([yshift=10pt]n7.west);
\coordinate (cN3N7) at (n3.north |- n7Wup);

\draw[->] (n3.north) -- (cN3N7)
-- node[midway, above, align=center]
{$\delta_{f,d},\; \delta_{r,d},\; \dot{\delta}_{f,d},\; \dot{\delta}_{r,d}$} (n7Wup);

\end{tikzpicture}